\documentclass{article} 
\usepackage{iclr2026_malgai,times}

\usepackage{amsmath,amsfonts,bm}

\def\eqref#1{equation~\ref{#1}}

\def\1{\bm{1}}

\DeclareMathAlphabet{\mathsfit}{\encodingdefault}{\sfdefault}{m}{sl}
\SetMathAlphabet{\mathsfit}{bold}{\encodingdefault}{\sfdefault}{bx}{n}

\newcommand{\E}{\mathbb{E}}

\providecommand{\argmax}{}
\renewcommand{\argmax}{\operatornamewithlimits{arg\,max}}

\usepackage{hyperref}
\usepackage{url}

\usepackage{amsmath,amssymb,amsthm}
\usepackage{soul}
\usepackage{array}
\usepackage{tikz}
\usepackage{booktabs}
\usepackage{dsfont}
\usepackage{diagbox}
\usepackage[noabbrev,capitalize]{cleveref}
\usetikzlibrary{calc}

\newtheorem{definition}{Definition}

\newtheorem{lemma}{Lemma}

\newtheorem{corollary}{Corollary}
\newtheorem{conjecture}{Conjecture}
\newtheorem{proposition}{Proposition}

\newcommand{\KLBP}{\mathrm{D}^\beta_\mathrm{KLBP}}
\newcommand{\BRD}{\mathrm{D}_{\mathrm{BR}}}

\newcommand{\abs}[1]{\left|#1\right|}
\newcommand{\bO}[1]{\mathcal{O}\left(#1\right)}
\newcommand{\norm}[1]{\lVert#1\rVert}

\usepackage{booktabs}
\usepackage{colortbl}
\usepackage{xcolor}
\usepackage{array}

\definecolor{headerblue}{HTML}{2C3E6B}
\definecolor{headerlightblue}{HTML}{4A6FA5}
\definecolor{rowgray}{HTML}{F2F4F8}
\definecolor{bordergray}{HTML}{D0D5DD}
\definecolor{greencheck}{HTML}{2E7D32}
\definecolor{redcross}{HTML}{C62828}
\definecolor{amberconj}{HTML}{E65100}
\definecolor{bordergray}{HTML}{B0B8C4}

\usepackage{graphicx}

\newcommand{\prooflink}[1]{\hfill\hyperref[#1]{\textup{\textcolor{blue}{[proof]}}}}

\title{Learning the Preferences of a Learning Agent}

\author{Karim Abdel Sadek\thanks{Equal contribution, alphabetical order.}, Mark Bedaywi\footnotemark[1], Rhys Gould\footnotemark[1], Stuart Russell \\
University of California, Berkeley\\
\texttt{\{karimabdel,mark\_bedaywi,rhys\_gould,russell\}@berkeley.edu}
}

\iclrfinalcopy 
\begin{document}

\maketitle

\crefname{theorem}{thm.}{thms.}
\Crefname{theorem}{Thm.}{Thms.}

\crefname{proposition}{prop.}{Props.}
\Crefname{proposition}{Prop.}{Props.}

\crefname{corollary}{cor.}{cors.}
\Crefname{corollary}{Cor.}{Cors.}

\begin{abstract}
For AI systems to be useful to humans, they must understand and act in accordance with our values and preferences. Since specifying preferences is a hard task, \textit{inverse reinforcement learning} (IRL) aims to develop methods that allow for inferring preferences from observed behavior. However, IRL assumes the human to be approximately optimal. This is a big limitation in cases where the human themselves may be learning to act optimally in an environment. In this paper, we formalize the problem of \textit{learning the preferences of a learning agent}: a predictor observes a learner acting online and tries to infer the underlying reward function being (initially suboptimally) optimized by the learner. We model the learner as either being no-regret, or as converging to an optimal Boltzmann policy over time. In each of these settings, we establish theoretical guarantees for various preference learning algorithms, or otherwise show that such guarantees are impossible.
\end{abstract}

\section{Introduction}

AI systems are human-level or superhuman in many disciplines, from playing games such as Go and chess~\citep{silver2016mastering} to predicting protein structures~\citep{jumper2021highly}. Reinforcement learning (RL) has been the underpinning of many such breakthroughs. In RL, much of the progress has been driven by having clean and well-defined \textit{reward functions}, such as game scores or win/loss outcomes~\citep{mnih2013playing,silver2016mastering}. The ability to optimize for a reward signal has even been argued to be sufficient and necessary to get us all we may ever want out of AI systems~\citep{sutton2018rl, silver2021reward}.

A successful approach to designing effective reward functions has been learning them from observed behavior, known as inverse reinforcement learning (IRL)~\citep{ng2000algorithms, abbeel2004apprenticeship}. In this framework, we observe the actions of an agent in an environment and try to infer the underlying reward function that the agent is optimizing. A similar approach in spirit has allowed large language models to follow human instructions and optimize for humans' preferences~\citep{christiano2017deep, stiennon2020learning, bai2022training}. IRL assumes that the agent of interest acts (approximately) optimally/. Yet, we argue that in practice the agent will likely be learning how to act optimally over the course of data collection, taking suboptimal actions initially.

In this paper, we formalize and address the problem of \textit{learning the preferences of a learning agent}. We model this problem via an online learning formulation: the \textit{learner} (or \textit{human}) selects actions online while receiving rewards from an unknown reward function. In particular, we assume that the learner agent is either a no-regret learner, or is converging to an optimal Boltzmann policy. The \textit{predictor} observes the actions taken by the learner and aims to infer the true reward function at each step. Note that recovering the exact reward function is not possible in IRL~\citep{ng1999policy,ng2000algorithms,abbeel2004apprenticeship,cao2021identifiability}. We formalize different notions of error, which include being able to find a reward function that matches the optimal or the Boltzmann policy of the agent, or more simply minimizing a metric distance between rewards~\citep{gleave2020quantifying, skalse2023starc}. 

Our problem setting captures real issues that arise in a variety of settings. For example, consider a recommendation system that tries to suggest movies to a user who is still learning which movies they like. Such a system will need to infer the user's true preferences while the user themselves is acting suboptimally to discover them (e.g. exploring a range of genres before deciding they enjoy thrillers the most). This is a more pressing issue as we develop increasingly personalized AI assistants which will need to infer our preferences across tasks in which humans may not yet be optimal.

\begin{figure}[htbp]
    \vspace{-0.6cm}
    \centering
    \includegraphics[width=0.7\textwidth]{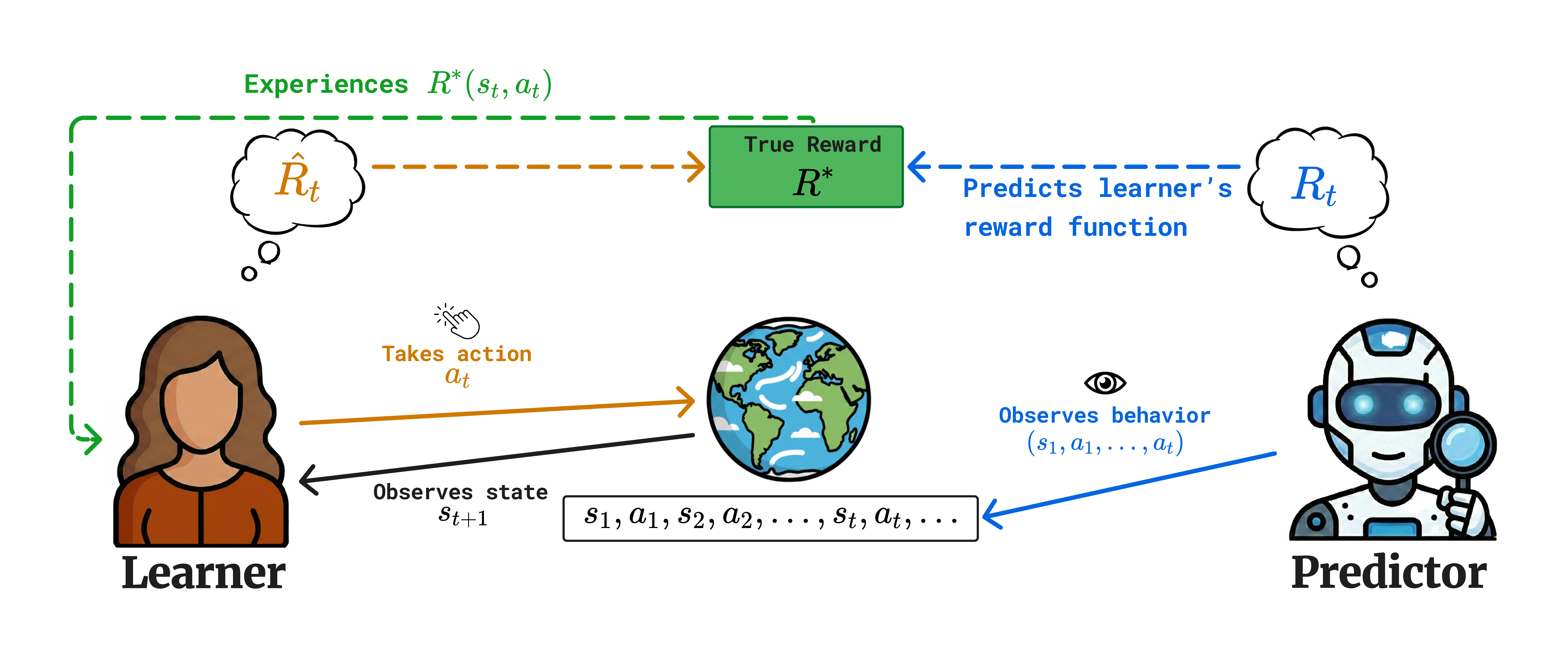}
    \vspace{-.7cm}
    \caption{\textit{Learning the preferences of a learning agent.} A learner interacts with an environment and learns to act optimally over time, with optimality measured by a ground-truth reward function $R^{*}$. The predictor observes only the learner's behavior $(s_1, a_1, \ldots, s_t, a_t)$ and aims to infer the preferences of the agent, producing reward estimates $R_1, \ldots, R_t$ (or $Q$-function estimates).}
    \label{fig:setup}
\end{figure}

\section{Related Work}

\textbf{Inverse Reinforcement Learning} Specifying a reward function may be extremely challenging as AI systems are employed in increasingly complex tasks. Inverse Reinforcement Learning (IRL)~\citep{russell1998learning,ng2000algorithms} studies how a reward function can be recovered from data generated by an expert. However, IRL is underspecified~\citep{jeon2020reward, cao2021identifiability, skalse2023invariance}. An approach to the IRL problem is to use \textit {apprenticeship learning}~\citep{abbeel2004apprenticeship}, where the task is to infer a reward function useful for finding a policy that matches the behavior of a human. \citet{syed2007game} generalizes this approach via a game-theoretic formulation, and shows that it is possible to find a policy that might even outperform the human's. Other relevant work tried to frame IRL as a Bayesian problem~\citep{ramachandran2007bayesian}, or to solely identify the entire set of plausible rewards under different conditions~\citep{metelli2021provably,metelli2023towards}.

\textbf{Modeling human feedback} We are not the first to note that modeling humans correctly is instrumental in correctly identifying the reward function. For example, observing behavior from uncertain humans~\citep{laidlaw2021uncertain} or explicitly learning their suboptimality~\citep{evans2016learning, laidlaw2022boltzmann,reddy2018you} can improve reward inference. All of these works assume the human policy to be fixed over time. \citet{chan2019bandit} is the closest related work. They model the human as learning to play a bandit problem, while being assisted by a robot. Our work differs from theirs in several aspects. First, they only consider the bandit case, while we consider the sequential setting. They also assume the human to be noisily optimal or to be a greedy learner, which are stronger assumption than ours. Additionally, their results only concern assisting the human: there is no explicit learning about the reward function itself, as in our work.

\section{Model and problem setting}

We will consider a learner agent interacting with an environment, and a predictor agent who must determine the ground-truth reward function being optimized by the learner using only \textit{observed behavior}. Our main focus will be proving guarantees for various such prediction strategies. We provide a general overview of the settings we consider in \Cref{tab:comparison}.

\textbf{Background and notation.} In general, we will consider a state space $\mathcal{S}$, an action space $\mathcal{A}$, a ground-truth reward function $R^{*}: \mathcal{S} \times \mathcal{A} \to \mathbb{R}$, a transition distribution $\mu: \mathcal{S} \times \mathcal{A} \to \Delta(\mathcal{S})$, and a discount rate $\gamma \in [0, 1]$, with $\langle S,A,\mu,R^{*},\gamma \rangle$ defining an MDP. For simplicity, in this work we always assume $\gamma = 1$. The \textit{stateless} case is defined by the choice $|\mathcal{S}| = 1$ (letting us write $R^{*}: \mathcal{A} \to \mathbb{R}$), while in the \textit{stateful} case we will generally assume $|\mathcal{S}| > 1$. Throughout the paper we will prove results in both stateless and stateful setups. For the stateful case, we will define the ground-truth action-value function recursively as
$Q^{*}(s, a) := R^{*}(s, a) +  \mathbb{E}_{\mu(s'|s, a)}\left[\max_{a' \in \mathcal{A}}Q^{*}(s', a')\right]$.

We will assume that all learning of the agent takes place within a single episode of finite length. In the stateful case, at time step $t$ the learner will have access to the information $(s_1, a_1, R^{*}(s_1, a_1), \ldots, s_{t-1}, a_{t-1}, R^{*}(s_{t-1}, a_{t-1}), s_t)$ for taking their next action $a_t$. In the stateless case, this reduces to the information $(a_1, R^{*}(a_1), \ldots, a_{t-1}, R^{*}(a_{t-1}))$. We will denote the learner's state-visit count by $N_t(s) := |\{\tau \leq t: s_{\tau} = s\}|$, and $N_t(s, a)$ similarly. The predictor can only observe the learner's \textit{behavior} $(s_1, a_1, \ldots, s_t, a_t)$ and must predict action-value functions $Q_{1:t}$ in the stateful case, or reward functions $R_{1:t}$ in the stateless case. To support the generality of our model, we also show in \Cref{app:robustness_model} that this setup is equivalent to predicting the reward/action-value function only at the end of the episode.

\subsection{Modeling the Learner}

If we wish to prove guarantees about our ability to predict the preferences of a learning agent, we must make some assumptions about the process by which this learner selects actions and learns from their experience. We will consider two models of the learner: (a) assume the agent achieves no-regret (\Cref{subsubsec:lowregret}), or (b) assume the agent behaves Boltzmann-rationally with respect to some estimate of reward/action-value that is updated over time (\Cref{subsubsec:boltzhuman}).

\subsubsection{No-Regret Learner}
\label{subsubsec:lowregret}

One approach to modeling the learner is to assume that they achieve no-regret.

\textbf{Stateless.} In the stateless case, the learner will have taken some actions $(a_1, \ldots, a_T)$, and we say that the learner is $f(T)$-no-regret (for some function $f: \mathbb{N} \to \mathbb{R}$) if
\[
\mathrm{Reg}_L := \max_{a^* }
\sum_{t=1}^T  R^*(a^*)
\;-\;
\sum_{t=1}^T \ R^*(a_t)
\;\leq\; f(T).
\label{eq:noregret_stateless}
\]

where $\mathrm{Reg}_L $ is the regret incurred by the learner. We say that the learner is \textit{no-regret} if they are $f(T)$-no-regret for some $f(T) = o(T)$, i.e. if the average per-step regret vanishes as $T \to \infty$.

\textbf{Stateful.} For the stateful case, we will assume that the no-regret learner selects actions by sampling from a Markovian policy $\hat{p}_t(a_t|s_t)$ that satisfies
$$\mathrm{Reg}_L := \max_{\pi^{*}} \mathbb{E}_{\mu, \pi^{*}}\left[\sum_{t=1}^{T} R^{*}(s_t, a_t)\right] - \mathbb{E}_{\mu, \hat{p}_{1:T}}\left[\sum_{t=1}^{T} R^{*}(s_t, a_t)\right] \leq f(T)$$
for some $f(T) = o(T)$. Note that the time indexing of $\hat{p}_t$ allows for capturing the fact that the agent is learning over time. This notion of regret that replays the episode from the beginning when comparing against the optimal policy in hindsight---as opposed to external regret which appends the fixed action to the current trajectory---is known as policy regret \citep{arora2012online}.

\subsubsection{Boltzmann Rational Learner}
\label{subsubsec:boltzhuman}

We will also consider another approach to modeling the learner: assume that, over time, they converge to a Boltzmann rational policy~\citep{luce1959individual,ziebart2010modeling}.

\textbf{Stateless.} In the stateless case, we model the learner's action selection process as
$a_t \sim \hat{p}_t$, where $\hat{p}_t(a) \propto \exp(\beta \hat{R}_t(a))$
for a rationality parameter $\beta \in [0,\infty)$, and where $\hat{R}_t$ models the learner's estimate of the reward function at time step $t$. To capture the fact that the agent is learning, we assume that $\hat{R}_{t}$ converges to the true reward function $R^{*}$ over time. Formally, 
\begin{equation}
\label{eqn:boltzmannhumanlearning}
\sum_{t=1}^{T} \norm{\hat{R}_t - R^{*}}_{\infty} \leq f(T)
\end{equation}
for some $f(T) = O(T^{\alpha})$ where $\alpha  \in (0, 1)$ captures the rate of the agent's learning.

\textbf{Stateful.} In the stateful case, the learner receives $s_t$ at time step $t$ and selects their action as
$a_t \sim \hat{p}_{t}(\cdot|s_t)$, with $ \hat{p}_{t}(a|s) \propto \exp(\beta \hat{Q}_{t}(s, a))$
where $\hat{Q}_t$ models the learner's estimate of the action-value function at time step $t$. We model learning by assuming that this estimate $\hat{Q}_t$ satisfies:
\begin{equation}
\label{eqn:statefulboltzmannhumanlearning}
\sum_{\tau=1: \, s_{\tau} = s}^{T} \norm{\hat{Q}_{\tau}(s, \cdot) - Q^{*}(s, \cdot)}_{\infty} \leq f(N_T(s))
\end{equation}
for every $s \in \mathcal{S}$ and for some $f(N_T(s)) = O(N_T(s)^{\alpha})$ where $\alpha \in (0, 1)$.

An important detail here is that Boltzmann rationality is not itself a model of learning or exploration. Instead, it is a model capturing the inability to optimize the ground-truth reward function. Boltzmann rationality could model exploration if $\beta \to \infty$ over time. This is the reason why the above is paired with the assumption that the learner's estimate of the reward/action-value converges over time.

\subsection{Evaluation Measures}

There are various reasonable performance measures that we could use to evaluate the quality of the predictor's estimates $R_{1:T} = (R_1, \ldots, R_T)$, (or $Q_{1:T} = (Q_1, \ldots, Q_T)$) for the true reward function $R^{*}$ (or $Q^{*}$). We will introduce the best-response distance (\Cref{subsubsec:bestdist}), the KL divergence between Boltzmann rational policies (\Cref{subsubsec:klbp}), and norm-based measures (\Cref{subsubsec:metnorms}). These measures range from weak (best-response, which only requires matching the optimal action) to strong (norm-based, which require matching the full reward/action-value structure).

\subsubsection{Best-response distance}
\label{subsubsec:bestdist}

Intuitively, the best response distance between two reward functions measures whether both reward functions induce similar optimal policies. We will now make this concrete.

\textbf{Stateless.} Let $a^{R_t}$ denote the best action response given the reward estimate $R_t$. The stateless best-response distance between $R^{*}$ and a sequence of reward functions $R_{1:T} = (R_1, \ldots, R_T)$ is defined as:
$$\BRD(R^{*}, R_{1:T}) := \max_{a^{*} \in \mathcal{A}} \sum_{t=1}^{T} (R^{*}(a^{*}) - R^{*}(a^{R_t}))$$
This measures whether the optimal policies associated with the sequence $R_{1:T}$ also perform well under $R^{*}$, or in other words, whether they match the optimal policy $\pi^{*}$.

\textbf{Stateful.} Let $\pi_{1:T}$ denote the best response policies of the action-value estimates $Q_{1:T}$ at each respective time step $t$.
We define the stateful best-response distance between $Q^{*}$ and $Q_{1:T}$ as:
$$\BRD(Q^{*}, Q_{1:T}) := \max_{\pi^{*}} \mathbb{E}_{\mu, \pi^{*}}\left[\sum_{t=1}^{T} R^{*}(s_{t}, a_{t})\right] - \mathbb{E}_{\mu, \pi_{1:T}} \left[\sum_{t=1}^{T} R^{*}(s_t, a_t)\right]$$

\subsubsection{KL divergence between Boltzmann rational policies}
\label{subsubsec:klbp}

\textbf{Stateless.} Given a reward function $R$, let $\pi^\beta(R)$ denote the associated Boltzmann rational policy, i.e. $\pi^{\beta}(R)(a) \propto \exp(\beta R(a))$. We can define the KL distance between $R^{*}$ and the sequence $R_{1:T}$:
\begin{equation*}
    \KLBP(R^*, R_{1:T}) := \sum_{t=1}^{T}\textrm{KL}(\pi^\beta(R_t) \mid\mid \pi^\beta(R^*)).
\end{equation*}
Note that this direction for the KL penalizes the predictor's induced policy for placing high probability on actions that are suboptimal under $R^{*}$.

\textbf{Stateful.} For the stateful case, we will similarly denote the Boltzmann rational policy associated with the action-value function $Q$ by $\pi^{\beta}(Q)$, with $\pi^{\beta}(Q)(a|s) \propto \exp(\beta Q(s, a))$. We can then define the KL distance between $Q^{*}$ and the sequence $Q_{1:T}$ as:
\begin{equation*}
    \KLBP(Q^*, Q_{1:T}) := \sum_{t=1}^{T}\sum_{s \in \mathcal{S}} v_t(s) \, \textrm{KL}(\pi^\beta(Q_t)(\cdot|s) \mid\mid \pi^\beta(Q^*)(\cdot|s)).
\end{equation*}
for some appropriate state-wise weighting function $v_t$, e.g. $v_t(s) = \sqrt{N_{t-1}(s)/(t-1)}$. We argue that weighting by visit counts is fair; if the learner has only rarely entered a state $s$, we have less data available for predicting $Q^{*}(s,\cdot)$, and so it is fair to suppress the associated prediction errors.

\subsubsection{Norm-based measures}
\label{subsubsec:metnorms}
We also consider norm-based measures of error. In particular, we will consider defining distances between reward functions based on the $\ell_2$ and $\ell_\infty$ norms.

\textbf{Stateless.} Given that for a finite action space we can view a reward function as a real-valued vector $R \in \mathbb{R}^{|\mathcal{A}|}$, we can define the norms $\norm{R}_2 := \sqrt{\sum_{a \in \mathcal{A}} R(a)^2}$ and $\norm{R}_{\infty} := \max_{a \in \mathcal{A}} |R(a)|$. We can then define the associated measures:
$$D_{\ell_{2}}(R^{*}, R_{1:T}) := \sum_{t=1}^{T} \norm{R^{*} - R_t}_{2}, \qquad D_{\ell_{\infty}}(R^{*}, R_{1:T}) := \sum_{t=1}^{T} \norm{R^{*}-R_t}_{\infty}$$

\textbf{Stateful.} The stateful case follows similarly, but we must take an appropriate weighting over states. We define:
$$D_{\ell_{\infty}}(Q^{*}, Q_{1:T}) := \sum_{t=1}^{T} \sum_{s \in \mathcal{S}} v_t(s) \, \norm{Q^{*}(s, \cdot) - Q_t(s, \cdot)}_{\infty}$$
for an appropriate state-wise weighting $v_t$. For example, in \Cref{subsubsec:boltzellinf} we establish performance guarantees for an empirical prediction strategy under the performance measure $D_{\ell_{\infty}}$ with weighting $v_t(s) = \sqrt{N_{t-1}(s)/(t-1)}$.

\section{Theoretical Results}

\begin{table}[h]
\centering
\renewcommand{\arraystretch}{1.5}
\resizebox{\textwidth}{!}{%
\newcolumntype{C}[1]{>{\centering\arraybackslash}m{#1}}
\begin{tabular}{@{} C{1.8cm} !{\color{bordergray}\vrule width 0.8pt} C{2.2cm} C{2.2cm} C{2.2cm} !{\color{bordergray}\vrule width 0.8pt} C{2.2cm} C{2.2cm} C{2.2cm} !{\color{bordergray}\vrule width 0.8pt} @{}}
\toprule[1.5pt]
\textbf{Learner}
& \multicolumn{3}{c!{\color{bordergray}\vrule width 0.8pt}}{\cellcolor{headerblue}\textcolor{white}{\textbf{No-Regret Learner}~$o(T)$}}
& \multicolumn{3}{c!{\color{bordergray}\vrule width 0.8pt}}{\cellcolor{headerlightblue}\textcolor{white}{\textbf{Boltzmann-Rational Learner}}} \\
\cmidrule(lr){1-1} \cmidrule(lr){2-4} \cmidrule(lr){5-7}
\textbf{Metric}
& \textbf{BR}
& \textbf{KL}
& $\boldsymbol{\ell_{\infty}}$~\textbf{norm}
& \textbf{BR}
& \textbf{KL}
& $\boldsymbol{\ell_{\infty}}$~\textbf{norm} \\
\midrule[0.8pt]
\rowcolor{rowgray}
\textbf{Stateful}
& \textcolor{greencheck}{\checkmark}~\Cref{prop:stateful_easy_noregret}
& \textcolor{redcross}{$\boldsymbol{\times}$}~{(\Cref{prop:kl_imp_noregret_human})}
& \textcolor{redcross}{$\boldsymbol{\times}$}~{(\Cref{prop:kl_imp_noregret_human_infty})}
& \textcolor{amberconj}{\textbf{?}}~Conj.~\ref{conj:statefulklbp}
& \textcolor{amberconj}{\textbf{?}}~Conj.~\ref{conj:statefulklbp}
& \textcolor{greencheck}{\checkmark}~\Cref{prop:stateful_linfty_empirical_average} \\
\addlinespace[5pt]
\textbf{Stateless}
& \textcolor{greencheck}{\checkmark}~\Cref{prop:stateless_easy_noregret}
& \textcolor{redcross}{$\boldsymbol{\times}$}~{(\Cref{prop:kl_imp_noregret_human})}
& \textcolor{redcross}{$\boldsymbol{\times}$}~{(\Cref{prop:kl_imp_noregret_human_infty})}
& \textcolor{amberconj}{\textbf{?}}~Conj.~\ref{conj:statelessklbp}
& \textcolor{amberconj}{\textbf{?}}~Conj.~\ref{conj:statelessklbp}
& \textcolor{greencheck}{\checkmark}~\Cref{prop:stateless_linfty_empirical_average} \\
\bottomrule[1.5pt]
\end{tabular}%
}
\caption{\textcolor{greencheck}{\checkmark} denotes positive results, \textcolor{redcross}{$\boldsymbol{\times}$} impossibility results, and \textcolor{amberconj}{\textbf{?}} open problems. The assumptions are on the environment (stateful vs.\ stateless) and learner model (no-regret or Boltzmann), under the evaluation metrics of best-response distance (BR), KL divergence between policies, and $\ell_\infty$ norm.
}
\label{tab:comparison} 
\end{table}

Now that we have formalized our problem setup, we can begin to prove guarantees for various prediction strategies. In \Cref{tab:comparison}, we summarize our results. We consider each combination of learner model and performance measure for both the stateless and stateful cases and either establish a guarantee, propose a conjecture, or prove that a good guarantee is impossible. We use assumptions that closely match ones previously considered in the literature. We discuss this in detail in \Cref{sec:conclusion}.

\subsection{Minimizing the \textit{KL-Boltzmann distance}}

\subsubsection{Learner model: \textit{No-regret learner}}
\label{subsubsub:regret_imp}

Here we will show that for a no-regret learner, there does not exist a generic guarantee on the cumulative $\ell^2$ distance better than $D_{\ell_2}(R^{*}, R_{1:T}) = \Theta(T)$. Since the $\ell_2$ distance is bounded by the KL, our impossibility result also implies an impossibility result for the KL-Boltzmann distance, meaning we can't hope to minimize the KL-Boltzmann distance when observing a no-regret learner.

\begin{lemma}
\label{lemma: imp_noregret_norms}
Suppose a learner is no-regret. Given any algorithm $A$ predicting $R_{1:T}$, there exists an instance of the learner such that $D_{\ell_2}(R^*, R_{1:T}) = \Theta(T)$.
\prooflink{app:proof_lemma_imp_noregret_norms}
\end{lemma}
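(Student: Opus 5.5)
The plan is an indistinguishability (two-instance) argument. We construct two reward functions $R^{(1)}$ and $R^{(2)}$ that are far apart in $\ell_2$, together with a single action sequence $(a_1,\ldots,a_T)$ that is a valid no-regret trajectory under both. The predictor sees only the behavior, so on this sequence it outputs the same $R_{1:T}$ in both worlds. The triangle inequality then forces a linear error in at least one of them.

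\textbf{Construction (stateless case).} Take $\mathcal{A}=\{1,2\}$. Let $R^{(1)}(1)=1$, $R^{(1)}(2)=0$, and let $R^{(2)}(1)=1$, $R^{(2)}(2)=1$. Let the learner play $a_t=1$ for all $t$.
\begin{itemize}
\item Under $R^{(1)}$, action $1$ is optimal, so the regret is $0$.
\item Under $R^{(2)}$, both actions are optimal, so the regret is again $0$.
\end{itemize}
Hence the fixed sequence is consistent with a (trivially) $f(T)$-no-regret learner for any $f\ge 0$ in both worlds. The two rewards satisfy $\norm{R^{(1)}-R^{(2)}}_2=1$.

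If one prefers to avoid ties or normalization issues (e.g.\ invariance to shifts), the gap can be made strict. Set $R^{(2)}(2)=1-\epsilon$ for a small constant $\epsilon>0$. The regret under $R^{(2)}$ is still $0$, and the $\ell_2$ gap stays bounded below by a constant. The essential point is that the learner never visits action $2$, so nothing about $R^*(2)$ is revealed beyond being at most $R^*(1)$.

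\textbf{Key step.} A deterministic algorithm $A$ maps the common observed history $(a_1,\ldots,a_t)$ to the same $R_t$ in both worlds. By the triangle inequality,
$$\norm{R^{(1)}-R_t}_2+\norm{R^{(2)}-R_t}_2\ge \norm{R^{(1)}-R^{(2)}}_2=c>0.$$
Summing over $t$ gives $D_{\ell_2}(R^{(1)},R_{1:T})+D_{\ell_2}(R^{(2)},R_{1:T})\ge cT$. So for at least one $i$, $D_{\ell_2}(R^{(i)},R_{1:T})\ge cT/2$, and we choose that instance.

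For the matching upper bound in $\Theta(T)$, restrict to bounded reward classes (say $R\in[0,1]^{\mathcal{A}}$). Any algorithm that outputs bounded rewards then has per-step error at most $\sqrt{|\mathcal{A}|}$, so $D_{\ell_2}=O(T)$. For randomized algorithms, apply the same argument in expectation, using convexity of the norm or Jensen's inequality.

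\textbf{Main obstacle.} The delicate part is making the instance-choice legitimate: the adversary picks $R^*$ after seeing the algorithm $A$, but $A$ is fixed before data arrives. A deterministic history makes this clean.

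We also need the lower bound to hold for every $T$, not just along a subsequence. Because the gap $c$ is fixed, the bound $cT/2$ holds at each $T$. Choosing the worse instance may depend on $T$, which the statement allows.

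The stateful case (if needed for the paper's later propositions) embeds this same construction in a single state, or in an unreachable-action gadget.
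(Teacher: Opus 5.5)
Your proof is correct and is essentially the paper's argument: a learner that always plays the same action has zero regret under two rewards at constant $\ell_2$ distance. The predictor's common output therefore incurs error at least $\frac{T}{2}\norm{R_1^*-R_2^*}_2$ in one of the two worlds by the triangle inequality; the paper uses $R_1^*=\frac{1}{|\mathcal{A}|}(1,\ldots,1)$ and $R_2^*=(1,0,\ldots,0)$ for general $|\mathcal{A}|$ instead of your two-action pair. Your side remarks are also sound: the $O(T)$ half of $\Theta(T)$ does require bounded predictions, which the paper leaves implicit, and for randomized predictors linearity of expectation suffices (no convexity needed) since the output distribution is identical in both worlds.
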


This shows that minimizing the $\ell_2$ distance is impossible in our case. We can now show that having a large $\ell_2$ distance implies having a large KL distance.

\begin{proposition}
\label{prop:kl_imp_noregret_human}
Suppose a learner is no-regret. Given any algorithm $A$ predicting $R_{1:T}$, there exists an instance of the learner such that $\KLBP(R^*, R_{1:T}) = \Theta(T)$.
\prooflink{app:proof_prop_kl_imp_noregret_human}
\end{proposition}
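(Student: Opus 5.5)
The plan is to reuse the indistinguishability construction behind \Cref{lemma: imp_noregret_norms}, but to argue directly at the level of Boltzmann policies. The reason is that a large $\ell_2$ distance between rewards does not by itself give a large KL: $\pi^\beta(R)$ is invariant under adding a constant to $R$, so $D_{\ell_2}$ can be $\Theta(T)$ while $\KLBP$ is zero. We therefore need two candidate ground-truth rewards $R^{(1)}, R^{(2)}$ that the predictor cannot tell apart and whose \emph{Boltzmann policies} differ by a constant. Concretely, take a stateless problem with $\mathcal{A}=\{a_1,a_2\}$, $R^{(1)}=(1,0)$ and $R^{(2)}=(1,1/2)$. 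These are not shifts of each other, so $\delta := \mathrm{TV}(\pi^\beta(R^{(1)}),\pi^\beta(R^{(2)}))>0$ is a constant depending only on $\beta$.

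\textbf{Step 1 (identical behavior).} In both instances $a_1$ is the unique optimal action. Let the learner play $a_1$ at every step; its regret is $0$, so it is no-regret in both instances. The predictor observes only $(a_1,\ldots,a_t)$, which is the same in both instances. Hence any algorithm $A$ (for a randomized $A$, condition on its internal randomness) outputs the same sequence $R_{1:T}$ under $R^{(1)}$ and under $R^{(2)}$.

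\textbf{Step 2 (per-step lower bound).} Fix $t$ and write $p_t=\pi^\beta(R_t)$. By the triangle inequality, $\mathrm{TV}(p_t,\pi^\beta(R^{(1)}))+\mathrm{TV}(p_t,\pi^\beta(R^{(2)}))\ge \delta$. Pinsker's inequality gives $\mathrm{KL}(p_t\,\|\,q)\ge 2\,\mathrm{TV}(p_t,q)^2$. Combining these with $x^2+y^2\ge (x+y)^2/2$ yields
\[
\mathrm{KL}(p_t\,\|\,\pi^\beta(R^{(1)}))+\mathrm{KL}(p_t\,\|\,\pi^\beta(R^{(2)}))\ \ge\ \delta^2 .
\]
Summing over $t$ gives $\KLBP(R^{(1)},R_{1:T})+\KLBP(R^{(2)},R_{1:T})\ge \delta^2 T$. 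So for at least one $i\in\{1,2\}$ we get $\KLBP(R^{(i)},R_{1:T})\ge \delta^2T/2=\Omega(T)$, and we take that instance.

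\textbf{Step 3 (matching upper bound).} To obtain $\Theta(T)$ rather than just $\Omega(T)$, we need $O(T)$. If predictions are taken from a bounded class, say $\norm{R_t}_\infty\le B$ (as is implicit in the norm-based setting), then $\log(p_t(a)/\pi^\beta(R^*)(a))\le 2\beta(B+\norm{R^*}_\infty)$ for every $a$. Hence each KL term is bounded by a constant, and the sum is $O(T)$. The statement should be read with this standing boundedness assumption; without it, only $\Omega(T)$ holds, since an unbounded predictor can make the KL arbitrarily large.

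The main obstacle is exactly the shift-invariance issue, which is why I would not chain through \Cref{lemma: imp_noregret_norms} plus a generic ``KL dominates $\ell_2$'' claim. Such a bound cannot hold for raw rewards. It does hold for the induced policies, so the proof must be carried out on those policies, as in Steps 1--3. If one insists on deducing the result from the lemma, the lemma's instances must be chosen, as above, to differ modulo constants.
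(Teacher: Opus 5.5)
Your lower-bound argument is correct and uses the paper's construction: a learner that always plays $a_1$, two rewards under which that is optimal, and Pinsker. The difference is that you apply the triangle inequality to the induced Boltzmann policies rather than to the rewards.

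The paper gets the proposition in one line by applying \Cref{lemma:kl_bounds_l2}, $\mathrm{KL}(p\|q)\ge\frac12\norm{p-q}_2^2$ for probability vectors, to \Cref{lemma: imp_noregret_norms}. That lemma, however, lower-bounds $\sum_t\norm{R^*-R_t}_2$ for \emph{reward} vectors, so as written the two results do not compose. This is exactly the shift-invariance problem you identify: $R_t=R^*+c$ has zero KL but $\ell_2$ error $|c|\sqrt{|\mathcal{A}|}$ at every step.

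Your policy-level version is what the argument actually needs: the TV triangle inequality between $\pi^\beta(R^{(1)})$ and $\pi^\beta(R^{(2)})$, then Pinsker, then $x^2+y^2\ge(x+y)^2/2$. The paper's own instances, uniform versus $(1,0,\dots,0)$, are not shifts of one another, so its construction also works once redone this way.

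Two small points on the lower bound. Both versions tacitly require $\beta>0$: at $\beta=0$ every Boltzmann policy is uniform, $\KLBP\equiv 0$, and your $\delta$ vanishes. For a randomized $A$, do not pick $i$ per realization of its coins. Take expectations first, $\mathbb{E}[\KLBP(R^{(1)},R_{1:T})]+\mathbb{E}[\KLBP(R^{(2)},R_{1:T})]\ge\delta^2T$, and then choose $i$.

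Your Step 3 contains a false claim. No boundedness assumption on $R_t$ is needed, and an unbounded predictor cannot make the KL arbitrarily large. The divergence is $\mathrm{KL}(\pi^\beta(R_t)\,\|\,\pi^\beta(R^*))$, with the fixed full-support policy $q=\pi^\beta(R^*)$ in the second argument. Hence for every $p$,
\[
\mathrm{KL}(p\|q)=\sum_a p(a)\log p(a)-\sum_a p(a)\log q(a)\le\log\frac{1}{\min_a q(a)}\le\beta\Bigl(\max_a R^*(a)-\min_a R^*(a)\Bigr)+\log|\mathcal{A}|.
\]
Your concern would apply to the reverse direction $\mathrm{KL}(\pi^\beta(R^*)\,\|\,\pi^\beta(R_t))$, not this one. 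So the $O(T)$ upper bound holds for every predictor, and your argument gives $\Theta(T)$ unconditionally. The paper's proof never addresses the upper bound at all.
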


This impossibility result also immediately follows in the stateful case, since that is a strictly harder problem to solve. We can also show the same result for the $\ell_{\infty}$ distance via analogous arguments.




\begin{proposition}
\label{prop:kl_imp_noregret_human_infty}
Suppose a learner is no-regret, and we are in the stateless (or stateful) case. Given any algorithm $A$ predicting $R_{1:T}$ (or $Q_{1:T}$), there exists an instance of the learner such that $D_{\ell_{\infty}}(R^*, R_{1:T}) = \Theta(T)$ (or $D_{\ell_{\infty}}(Q^*, Q_{1:T}) = \Theta(T)$).
\prooflink{app:proof_prop_kl_imp_noregret_human_infty}
\end{proposition}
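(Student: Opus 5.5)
The plan is to reuse the indistinguishability construction behind \Cref{lemma: imp_noregret_norms} and then pass from $\ell_2$ to $\ell_\infty$ by norm equivalence on the finite action space.

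\textbf{Stateless case.} Use two actions $\mathcal{A}=\{a,b\}$ and two ground-truth rewards, $R^{(1)} = (1,0)$ and $R^{(2)} = (1,1/2)$. In both, $a$ is optimal. Fix a single learner behaviour: play $a$ at every step. This is $0$-no-regret under both rewards. The predictor only observes $(a_1,\ldots,a_t)$, so its outputs $R_{1:T}$ are identical in the two instances.

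For every $t$, the triangle inequality gives
\[
\norm{R^{(1)} - R_t}_\infty + \norm{R^{(2)} - R_t}_\infty \;\ge\; \norm{R^{(1)} - R^{(2)}}_\infty = \tfrac12 .
\]
Summing over $t$, one of the two instances has $D_{\ell_\infty}(R^{*},R_{1:T}) \ge T/4$. If the predictor is randomized, apply the same argument in expectation: some instance has expected error $\ge T/4$.

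For the matching upper bound, note that the predictor could output the fixed reward $R^{(1)}$, which gives error $T/2$. The intended reading of ``$=\Theta(T)$'' is the lower bound $\Omega(T)$, because $O(T)$ holds trivially for any bounded prediction; I would make this explicit, for instance by restricting to predictions clipped to the reward range.

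Alternatively, I could cite \Cref{lemma: imp_noregret_norms} directly. Since $\norm{x}_\infty \ge \norm{x}_2/\sqrt{|\mathcal{A}|}$ and $|\mathcal{A}|$ is a constant, its $\Theta(T)$ bound for $D_{\ell_2}$ transfers immediately to $D_{\ell_\infty}$.

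\textbf{Stateful case.} Embed the stateless instance as follows. Take a single state $s$ with a self-loop, or several states with deterministic transitions, and set $R^{*}(s,\cdot)$ to be $R^{(1)}$ or $R^{(2)}$. Then $Q^{*}(s,\cdot)$ differs between the two instances. With $\gamma=1$ over a finite horizon, the gap is the reward gap plus the same continuation value in both instances, so $\norm{Q^{(1)}(s,\cdot)-Q^{(2)}(s,\cdot)}_\infty \ge 1/2$ at every step.

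The learner's policy of always playing $a$ is optimal in both instances, hence $0$-policy-regret. The weighting is $v_t(s)=\sqrt{N_{t-1}(s)/(t-1)} = 1$ for $t\ge 2$, since the learner is always in $s$. The same triangle-inequality argument then gives $D_{\ell_\infty}(Q^{*},Q_{1:T}) \ge (T-1)/4$ for one of the two instances.

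\textbf{Main obstacle.} The one delicate point is making sure $Q^{*}$ genuinely differs between the instances, since it is time-dependent under a finite horizon. I would fix a time-indexed $Q^{*}$, or use a one-step-terminal structure so that $Q^{*}(s,\cdot) = R^{*}(s,\cdot)$, which removes the issue.
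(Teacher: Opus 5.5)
Your proof is correct and takes essentially the paper's route. The paper disposes of this in one line by invoking the two-instance indistinguishability argument of \Cref{lemma: imp_noregret_norms} and transferring the $\ell_2$ lower bound to $\ell_\infty$ by norm comparison, which is exactly your alternative; your direct $\ell_\infty$ triangle-inequality version is the same construction. Your explicit factor in $\norm{x}_\infty \ge \norm{x}_2/\sqrt{|\mathcal{A}|}$ and your explicit stateful embedding are, if anything, more careful than the paper, which says $\ell_2$ is bounded by $\ell_\infty$ without the $\sqrt{|\mathcal{A}|}$ and does not treat the stateful case separately.
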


The takeaway from this section is clear: if you observe a no-regret learner, the best thing you can hope for is learning what the optimal action is. This is insufficient to achieve sub-linear prediction errors under richer evaluation measures, such as norm-based ones.

\subsubsection{Learner model: \textit{Boltzmann learner}}

What can we hope to retrieve instead when the learner is Boltzmann rational? We conjecture that the following results may hold.

\begin{conjecture}
\label{conj:statelessklbp}
    In the stateless case with a Boltzmann rational learner (with rationality parameter $\beta$), there exists an algorithm $A$ producing a sequence of reward predictions $R_{1:T}$ such that
    \begin{equation*}
        \KLBP(R^*, R_{1:T}) \leq O\left(\beta\sqrt{|\mathcal{A}|T}\right).
    \end{equation*}
\end{conjecture}

\begin{conjecture}
\label{conj:statefulklbp}
    In the stateful case with a Boltzmann rational learner, there exists an algorithm $A$ producing a sequence of action-value predictions $Q_{1:T}$ such that
    \begin{equation*}
        \KLBP(Q^*, Q_{1:T}) \leq O\left(\beta\sqrt{T}\cdot\textrm{poly}(|\mathcal{S}|, |\mathcal{A}|)\right).
    \end{equation*}
\end{conjecture}

If these conjectures hold, they would also imply the corresponding results for the best-response measure, as a consequence of \Cref{prop:lowkl_lowBR}.

\subsection{Minimizing the \textit{Best-response distance}}


\subsubsection{Learner model: \textit{No-regret learner}}

\label{subsubsec:lowregret_br}

\paragraph{Stateless case}

It is easy to show that to minimize BR if the learner is no-regret it's easy: it suffices to predict the reward function that puts all mass on the action played by the learner at the previous time step.

\begin{proposition}
\label{prop:stateless_easy_noregret}
    In the stateless case with a learner whose regret is bounded by $f(t)$ at every iteration $t$, there exists an algorithm $A$ producing a sequence of reward predictions $R_{1:T}$ such that
    \begin{equation*}
        \BRD(R^*, R_{1:T}) \leq 1 + f(T). 
        \tag*{\prooflink{app:proof_prop_stateless_easy_noregret}}
    \end{equation*}
\end{proposition}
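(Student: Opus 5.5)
The plan is to use the "follow the learner with a one-step delay" strategy the paper hints at just before the statement. At each time $t \ge 2$, the predictor outputs the indicator reward $R_t := \mathds{1}[\,\cdot = a_{t-1}]$, which puts all mass on the action the learner played at the previous step. At $t=1$ it outputs an arbitrary $R_1$, say the indicator of some fixed action. Since $R_t$ has the unique maximizer $a_{t-1}$, the best response satisfies $a^{R_t} = a_{t-1}$ for $t \ge 2$. This choice is legitimate: the predictor observes $(a_1,\ldots,a_{t-1})$ before committing to $R_t$, so the algorithm only uses available information.

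Next, I would decompose the best-response distance. Fix the maximizing action $a^*$ in the definition of $\BRD$ and split off the first term:
\[
\BRD(R^*,R_{1:T}) = \bigl(R^*(a^*) - R^*(a^{R_1})\bigr) + \sum_{t=2}^{T}\bigl(R^*(a^*) - R^*(a_{t-1})\bigr).
\]
The second sum equals $\sum_{t=1}^{T-1}(R^*(a^*) - R^*(a_t))$, which is exactly the learner's regret after $T-1$ rounds against $a^*$. It is therefore at most $f(T-1)$, by the hypothesis that the regret bound holds at every iteration.

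It remains to bound $f(T-1)$ and the first term. The first term is a single-round gap, which is at most $1$ under the implicit normalization $R^* \in [0,1]$ (I would state this boundedness assumption explicitly, since the constant $1$ in the statement requires it). To pass from $f(T-1)$ to $f(T)$, I would note that the regret after $T$ rounds equals the regret after $T-1$ rounds plus $R^*(a^*) - R^*(a_T) \ge 0$, because $a^*$ is the maximizer. Hence the $(T-1)$-round regret is at most the $T$-round regret, which is at most $f(T)$. Combining the two pieces gives $\BRD(R^*,R_{1:T}) \le 1 + f(T)$.

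The only real subtlety is the index shift, together with the bounded-reward assumption that justifies the additive $1$. Once $a^{R_t} = a_{t-1}$ is in place, the rest is direct.
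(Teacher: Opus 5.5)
Your proposal is correct and follows the paper's proof essentially verbatim: an arbitrary $R_1$, then the indicator reward on the learner's previous action $a_{t-1}$, splitting off the first round's gap (at most $1$) and reindexing the remaining sum into the learner's $(T-1)$-round regret. You also make explicit two points the paper leaves implicit. The first is the normalization $R^* \in [0,1]$ behind the additive $1$. The second is that cumulative regret is nondecreasing in the horizon, which is what justifies bounding the $(T-1)$-round regret by $f(T)$.
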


\paragraph{Stateful case}

By running the same simple proof as above, but playing the previous action the learner played at every state, we achieve a similar guarantee. \\
\begin{corollary}
    \label{prop:stateful_easy_noregret}
    In the stateful case with a learner whose regret is bounded by $f(t)$ at every iteration $t$, there exists an algorithm $A$ producing a sequence of action-value predictions $Q_{1:T}$ such that
    \begin{equation*}
        \BRD(Q^*, Q_{1:T}) \leq \abs{S} + f(T).
    \end{equation*}
\end{corollary}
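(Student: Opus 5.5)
We need to prove the stateful corollary: the predictor copies the learner's most recent action at each state, and $\BRD(Q^*,Q_{1:T})\le |\mathcal{S}|+f(T)$. The plan mirrors \Cref{prop:stateless_easy_noregret}. Define $Q_t(s,\cdot)$ to put value $1$ on the last action the learner took at $s$ before time $t$ and $0$ elsewhere, or arbitrary values if $s$ has not been visited. Then $\pi_t(s)$ is the learner's most recent action at $s$. The idea is that the predictor's sequence of policies, evaluated on the trajectory, is the learner's own behaviour delayed by one visit per state.

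\textbf{Step 1: split the predictor's reward into per-state visit sequences.} Index the visits to each state $s$ by $k=1,\dots,N_T(s)$. At the $k$-th visit with $k\ge 2$, the predictor plays the action the learner played at the $(k-1)$-th visit. So, state by state, the predictor's action sequence is the learner's shifted by one. Only the first visit to each state (at most $|\mathcal{S}|$ steps in total) is unmatched. Each such step costs at most $1$, which gives the $|\mathcal{S}|$ term, assuming rewards in $[0,1]$ as in the stateless case.

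\textbf{Step 2: compare to the learner's expected return.} We want
\[
\mathbb{E}_{\mu,\pi_{1:T}}\Big[\sum_t R^*(s_t,a_t)\Big]\ \ge\ \mathbb{E}_{\mu,\hat p_{1:T}}\Big[\sum_t R^*(s_t,a_t)\Big]-|\mathcal{S}|,
\]
after which the no-regret bound $\mathrm{Reg}_L\le f(T)$ finishes the proof. This is the main obstacle. The state distribution under the predictor's policies differs from the learner's, so the shifted-sequence argument does not transfer directly in expectation. The transitions are driven by the actions taken, and copying delayed actions changes which states get visited.

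\textbf{Step 3: resolve the state-distribution mismatch.} My first attempt would be a coupling. Define $\pi_t$ as a function of the realized learner trajectory, namely $\pi_t(s)=\hat p$'s most recent sample at $s$. Then interpret $\BRD$ as evaluating the predictor's actions along the learner's realized trajectory, i.e. on-trajectory, which is the natural reading since $\pi_t$ is only defined from observed data. Summing per state, a telescoping/shift argument gives the claim pathwise: the predictor's sum equals the learner's sum minus the final-visit terms plus the first-visit terms, which lies within $|\mathcal{S}|$. Taking expectations then yields the bound.

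If the fully off-trajectory rollout is required, I would instead appeal to the fact that the learner's policy $\hat p_t$ is Markovian and slowly varying. In that case the predictor's play can be treated as another policy sequence of the same class, and a simulation-lemma bound applied. I expect this to be where the argument is delicate, and I would first try to justify the on-trajectory interpretation, matching the "same simple proof" remark.
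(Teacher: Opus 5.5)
Your algorithm is the paper's: a one-hot $Q_t(s,\cdot)$ on the learner's most recent action at $s$. Your per-state shift argument is also correct as far as it goes. If $s$ is visited at times $\tau_1<\dots<\tau_N$, then $\sum_{k} R^*(s,\pi_{\tau_k}(s)) - \sum_{k} R^*(s,a_{\tau_k}) = R^*(s,\pi_{\tau_1}(s)) - R^*(s,a_{\tau_N}) \geq -1$ for rewards in $[0,1]$. Taking expectations over the learner's trajectory and using $\mathrm{Reg}_L \le f(T)$ then gives $|\mathcal{S}| + f(T)$. The paper's proof is only the sentence ``run the same simple proof, playing the previous action at every state'', so this telescoping is what it has in mind. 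It never addresses the state-distribution mismatch you raise in Step~2.

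That mismatch is a genuine gap, and Step~3 does not close it. Evaluating $\pi_t$ along the learner's realized trajectory is not a reading of the stated $\BRD$. Its second term $\mathbb{E}_{\mu,\pi_{1:T}}$ is a fresh rollout of $\pi_{1:T}$ in the MDP, so what you evaluate is a different metric.

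The rollout version cannot be rescued by a simulation lemma either. The no-regret model gives no ``slowly varying'' property, and in fact the bound fails for every predictor:
\begin{itemize}
\item Take states $s_0,g,b$ and actions $x,y$, starting at $s_0$. Let $x$ lead to $g$ and $y$ lead to $b$, make $g$ and $b$ absorbing, and set $R^*(g,\cdot)=1$ with all other rewards $0$.
\item A learner that plays $x$ at $s_0$ has zero regret at every horizon, so $f \equiv 0$.
\item The predictor's $Q_1$ is formed before any data, so $\pi_1(s_0)$ is fixed in advance. If it is $y$ (otherwise swap $x$ and $y$), the rollout of $\pi_{1:T}$ is absorbed in $b$. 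Then $\BRD(Q^*,Q_{1:T}) = T-1 > |\mathcal{S}| + f(T) = 3$ once $T>4$.
\item A randomized $\pi_1$ still incurs expected error at least $(T-1)/2$ on one of the two instances.
\end{itemize}
The underlying issue is that in a rollout, a wrong first-visit action can cost a loss in future value of order $T$. Your Step~1 charges it only the immediate gap, which is at most $1$.

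So state plainly that what you prove is the corollary for the realized-trajectory quantity $\max_{\pi^*}\mathbb{E}_{\mu,\pi^*}[\sum_t R^*(s_t,a_t)] - \mathbb{E}_{\mu,\hat{p}_{1:T}}[\sum_t R^*(s_t,\pi_t(s_t))]$. Flag that this differs from the paper's definition; the paper's one-line argument implicitly relies on the same substitution. Drop the simulation-lemma fallback.
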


\subsubsection{Learner model: \textit{Boltzmann learner}}

We will now show that minimizing the KL-Boltzmann distance is sufficient to induce bounds on the best-response distance. Hence, while we do not have an algorithm that minimizes the KL-Boltzmann, we show that finding one would suffice to minimize the best response distance

\begin{proposition}
\label{prop:lowkl_lowBR}
Suppose that there exists an algorithm $A$ predicting $R_{1:T}$ such that $\KLBP(R^*, R_{1:T}) \leq f(T)$. Then, it follows that  $ \BRD(R^{*}, R_{1:T}) \leq O(f(T))$.
\prooflink{app:proof_prop_lowkl_lowBR}
\end{proposition}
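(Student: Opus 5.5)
The plan is to prove a per-step inequality: for every reward estimate $R$,
\[
\Delta(R) := \max_{a}R^*(a) - R^*(a^{R}) \;\le\; C\cdot \mathrm{KL}(\pi^\beta(R)\,\|\,\pi^\beta(R^*)),
\]
with a constant $C$ depending only on $\beta$, $|\mathcal{A}|$ and $R^*$ (through its suboptimality gap). Summing over $t$ then gives $\BRD(R^*,R_{1:T}) \le C\,\KLBP(R^*,R_{1:T}) \le C f(T) = O(f(T))$. The constants are treated as fixed with respect to $T$, which is the sense of the $O(\cdot)$ in \Cref{prop:lowkl_lowBR}.

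\textbf{Step 1: reduce to a threshold statement.} Let $a^*$ be optimal for $R^*$, let $\Delta_{\max} := \max_a R^*(a)-\min_a R^*(a)$, and let $\delta>0$ be the smallest nonzero gap $R^*(a^*)-R^*(a)$. If $a^{R}$ is optimal for $R^*$, then $\Delta(R)=0$ and there is nothing to prove. Otherwise $\Delta(R)\le\Delta_{\max}$. It therefore suffices to show that whenever $a^{R}$ is suboptimal under $R^*$, the KL term is at least some constant $c>0$ independent of $T$. We can then take $C=\Delta_{\max}/c$.

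\textbf{Step 2: lower-bound the KL when $a^R$ is suboptimal.} Write $p=\pi^\beta(R)$ and $q=\pi^\beta(R^*)$. Because $a^R$ maximizes $R$, it is a mode of $p$, so $p(a^R)\ge 1/|\mathcal{A}|$. Because $a^R$ is suboptimal under $R^*$, we have
\[
q(a^R) \le \frac{e^{-\beta\delta}}{1+e^{-\beta\delta}} =: q_0 .
\]
Define $p_0 := 1/|\mathcal{A}|$. By the data-processing inequality applied to the binary partition $\{a^R\}$ versus its complement, $\mathrm{KL}(p\|q)\ge \mathrm{kl}(p(a^R)\,\|\,q(a^R))$, where $\mathrm{kl}$ is the binary KL. If $p_0>q_0$, then monotonicity of binary KL in each argument away from equality gives $\mathrm{kl}(p(a^R)\|q(a^R))\ge \mathrm{kl}(p_0\|q_0)=:c>0$. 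Alternatively, Pinsker's inequality gives $c \ge 2(p_0-q_0)^2$.

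\textbf{Main obstacle.} The difficulty is the regime $q_0\ge p_0$. This happens when $\beta\delta$ is small or $|\mathcal{A}|$ is large: $R$ can then favor a suboptimal action while staying close in KL. For example, ties or near-ties in $R$ can make $p$ nearly equal to $q$ even though the argmax differs. To handle this, I would use the tie-breaking freedom in how $a^{R}$ is defined, and compare $p(a^R)$ with $q(a^*)$ instead.

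\textbf{Repairing the weak case.} Since $p(a^R)\ge p(a^*)$ while $q(a^*)\ge e^{\beta\delta}q(a^R)$, we get
\[
\|p-q\|_1 \;\ge\; |p(a^R)-q(a^R)| + |p(a^*)-q(a^*)| \;\ge\; q(a^*)-q(a^R) \;\ge\; (1-e^{-\beta\delta})\,q(a^*) \;\ge\; \frac{1-e^{-\beta\delta}}{|\mathcal{A}|}.
\]
The middle inequality holds because, with $x=p(a^R)\ge y=p(a^*)$, the quantity $|x-q(a^R)|+|y-q(a^*)|$ is at least $q(a^*)-q(a^R)$. The last inequality uses $q(a^*)\ge 1/|\mathcal{A}|$.

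Pinsker's inequality then gives
\[
\mathrm{KL}(p\|q)\;\ge\; c := \frac{(1-e^{-\beta\delta})^2}{2|\mathcal{A}|^2},
\]
which is a $T$-independent constant. This argument covers both regimes at once, so I would adopt it as the main route and keep Step 2 only as intuition.

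The resulting constant is $C = 2\Delta_{\max}|\mathcal{A}|^2/(1-e^{-\beta\delta})^2$. This yields $\BRD(R^*,R_{1:T})\le C f(T)$.
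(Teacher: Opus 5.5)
Your proof is correct, and it takes a genuinely different route from the paper's. The ingredients overlap: Pinsker, the fact $p(a^R)\ge p(a^*)$, and the Boltzmann ratio between $q(a^*)$ and $q(a^R)$. The difference is in how they are converted into a bound.

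The paper turns the TV radius $\rho=\sqrt{\mathrm{KL}/2}$ (called $\delta$ there) into a continuous, gap-free per-step estimate $\max_a R^*(a)-R^*(a^{R_t})\le\frac{1}{\beta}\log\frac{1+m\rho}{1-m\rho}\approx\frac{2m\rho}{\beta}$, with $m=|\mathcal{A}|$. This is a $\sqrt{\mathrm{KL}}$-type bound, and the paper then passes through the per-step/final-prediction reductions of the appendix.

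You instead use a threshold argument: a wrong best response forces $\mathrm{KL}\ge c$, with $c$ depending on $\beta$, $|\mathcal{A}|$ and the gap of $R^*$. This gives a per-step bound \emph{linear} in the KL. You sum it directly over the given $R_{1:T}$, so no reductions are needed and the conclusion is about the same sequence.

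What each buys:
\begin{itemize}
\item The paper's estimate is uniform in $R^*$. Followed literally, however, it gives $\BRD\lesssim\frac{T}{\beta}\cdot 2m\sqrt{f(T)/(2T)}$, i.e.\ order $\frac{m}{\beta}\sqrt{Tf(T)}$, and only once $m\rho<1$. That is $O(f(T))$ only when $f(T)=\Omega(T)$.
\item Your argument delivers the stated $O(f(T))$, at the price of the constant $2\Delta_{\max}|\mathcal{A}|^2/(1-e^{-\beta\delta})^2$. This constant blows up under near-ties in $R^*$.
\end{itemize}

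That price is unavoidable. Take two actions with $R^*=(\epsilon,0)$, and predictions whose Boltzmann policy is nearly uniform but whose argmax is the second action. Each step costs $\epsilon$ in best-response distance but only about $\beta^2\epsilon^2/8$ in KL. Letting $\epsilon$ shrink with $T$ shows that no bound uniform in $R^*$ can beat order $\sqrt{Tf(T)}/\beta$.

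So your reading of the $O(\cdot)$ is the one under which the proposition actually holds: constants may depend on $\beta>0$, $|\mathcal{A}|$ and the gap structure of $R^*$. The condition $\beta>0$ is genuinely needed, since at $\beta=0$ the KL term vanishes identically.

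One small point: your remark about exploiting tie-breaking freedom is unnecessary. The final argument uses only $p(a^R)\ge p(a^*)$, which holds for any maximizer of $R$.
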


\subsection{Minimizing the \textit{$\ell_{\infty}$ norm}}


\subsubsection{Learner model: \textit{No-regret learner}}

It is impossible to prove a meaningful guarantee in this case, for the same reason that the $\ell^2$ distance (\Cref{subsubsub:regret_imp}) fails; the same counterexample in that section applies here identically.

\subsubsection{Learner model: \textit{Boltzmann learner}}
\label{subsubsec:boltzellinf}

In the following, we will prove that for a Boltzmann-rational learner (\Cref{subsubsec:boltzhuman}), a prediction strategy that makes use of the empirical average achieves good guarantees in terms of the $\ell_\infty$ error.

\paragraph{Stateless case}

Let $R_t$, $\hat{R}_t$, and $R^{*}$ denote the predictor's, learner's, and ground-truth reward function respectively. Recall that a Boltzmann-rational learner is modeled as selecting actions via $a_t \sim \pi^{\beta}(\hat{R}_t) =: \hat{p}_t$, and is also assumed to satisfy the $f$-guarantee of \Cref{eqn:boltzmannhumanlearning} for some appropriate $f$. The prediction $R_t$ can only depend on $a_{<t} \sim \hat{p}$, where $\hat{p}(a_{<t}) \equiv \prod_{\tau=1}^{t-1} \hat{p}_{\tau}(a_{\tau})$. We will assume that $R_t$ and $R^{*}$ are $\sigma$-normalized in the following sense:

\begin{definition}[stateless $\sigma$-normalized]
We say that $R: \mathcal{A} \to \mathbb{R}$ is $\sigma$-normalized (for some $\sigma \in \mathbb{R}$) if and only if $\quad \sum_{a \in \mathcal{A}} R(a) = \sigma$
\end{definition}

The goal of this section will be proving guarantees about the averaging strategy for $R_t$, defined as:

\begin{definition}[stateless $\sigma$-averaging strategy]
\label{defn:stateless_average_predictor}
Define the averaging strategy as selecting the $\sigma$-normalized $R_t$ that satisfies
$\pi^{\beta}(R_t)(a) = \frac{N_{t-1}(a)}{t-1} =: p_t(a)$
at time step $t > 1$. Such $R_t$ exists for all $t > 1$ and is unique (by $\sigma$-normalization), with the explicit form
$$R_t(a) = \frac{1}{\beta}\left(\log p_t(a) - \frac{1}{|\mathcal{A}|} \sum_{a' \in \mathcal{A}} \log p_t(a')\right) + \frac{\sigma}{|\mathcal{A}|}$$
\end{definition}
We will find that our results are independent of $\sigma$ and hence will refer to this strategy as simply the ``averaging strategy'' from now on. We can now prove our main result in the stateless case:






\begin{proposition}
\label{prop:stateless_linfty_empirical_average}
Suppose that the predictor $R_t$ follows the $\sigma-$averaging strategy, and that the learner $\hat{R}_t$ satisfies an $f$-guarantee (\Cref{eqn:boltzmannhumanlearning}). Then, with probability $1-\epsilon$, the cumulative $\ell_{\infty}$ prediction error can be bounded as:
$$D_{\ell_{\infty}}(R^{*}, R_{t_\text{e}:T}) = \sum_{t=t_{\text{e}}}^{T} \norm{R_t - R^{*}}_{\infty} \leq  \frac{2}{\beta} \overbrace{\sum_{t=t_{\text{e}}}^{T} \frac{1}{\kappa_t} \sqrt{\frac{2\log(2|\mathcal{A}|(T-1)/\epsilon)}{t-1}}}^{\sim \, O\left(\sqrt{T \log (|\mathcal{A}|T/\epsilon)}\right)} + \sum_{t=t_{\text{e}}}^{T} \frac{1}{\kappa_t} \frac{f(t-1)}{t-1}$$
defining $t_e := \min(t: N_{t-1}(a) > 0 \; \forall \; a \in \mathcal{A})$ and $\kappa_t := \min_{a \in \mathcal{A}} \min(p_t(a), p^{*}(a))$ (where $\kappa_t > 0$ for $t \geq t_{\text{e}}$), with $f$ capturing the learning rate of a Boltzmann learner (see \Cref{subsubsec:boltzhuman}).
\prooflink{app:proof_prop_stateless_linfty_empirical_average}
\end{proposition}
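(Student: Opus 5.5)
The plan is to reduce the $\ell_\infty$ error between rewards to an $\ell_\infty$ error between the induced Boltzmann policies. The key fact is that both $R_t$ and $R^*$ are $\sigma$-normalized. Hence each is recovered from its Boltzmann policy by the centered log map. Writing $p^* := \pi^\beta(R^*)$, we have
\[
R_t(a) - R^*(a) = \frac{1}{\beta}\Big(\log\tfrac{p_t(a)}{p^*(a)} - \tfrac{1}{|\mathcal{A}|}\textstyle\sum_{a'}\log\tfrac{p_t(a')}{p^*(a')}\Big).
\]
This gives $\norm{R_t - R^*}_\infty \le \frac{2}{\beta}\max_a |\log p_t(a) - \log p^*(a)|$, and $\sigma$ drops out. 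For $t \ge t_e$ all $p_t(a) > 0$. By the mean value theorem, $|\log x - \log y| \le |x-y|/\min(x,y)$, so
\[
\norm{R_t - R^*}_\infty \le \frac{2}{\beta\kappa_t}\,\norm{p_t - p^*}_\infty .
\]

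Next I split $p_t - p^*$ through the averaged learner policy $\bar p_t := \frac{1}{t-1}\sum_{\tau<t}\hat p_\tau$:
\[
\norm{p_t - p^*}_\infty \le \norm{p_t - \bar p_t}_\infty + \norm{\bar p_t - p^*}_\infty .
\]
For the first (statistical) term, fix $a$. The sequence $\mathds{1}[a_\tau = a] - \hat p_\tau(a)$ is a martingale difference sequence bounded in an interval of length $1$, since $\hat p_\tau$ is determined by the past. Azuma--Hoeffding then gives $|p_t(a) - \bar p_t(a)| \le \sqrt{2\log(2/\delta)/(t-1)}$ with probability at least $1-\delta$. A union bound over the $|\mathcal{A}|$ actions and the $T-1$ time steps, with $\delta = \epsilon/(|\mathcal{A}|(T-1))$, yields the first sum in the statement with overall probability $1-\epsilon$. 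The displayed order $O(\sqrt{T\log(|\mathcal{A}|T/\epsilon)})$ follows since $\sum_t (t-1)^{-1/2} = O(\sqrt{T})$, treating $1/\kappa_t$ as bounded.

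For the second (learning) term, I use the fact that the softmax map is Lipschitz in $\ell_\infty$. The Jacobian of $\pi^\beta$ is $\beta(\mathrm{diag}(p) - pp^\top)$, whose $\ell_\infty\to\ell_\infty$ operator norm is at most $2\beta\max_a p(a)(1-p(a)) \le \beta/2$. This gives $\norm{\hat p_\tau - p^*}_\infty \le \frac{\beta}{2}\norm{\hat R_\tau - R^*}_\infty$. Averaging and applying \Cref{eqn:boltzmannhumanlearning},
\[
\norm{\bar p_t - p^*}_\infty \le \frac{\beta}{2}\cdot\frac{f(t-1)}{t-1}.
\]
Multiplying by $\frac{2}{\beta\kappa_t}$ leaves exactly $\frac{1}{\kappa_t}\frac{f(t-1)}{t-1}$. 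Summing over $t$ from $t_e$ to $T$ gives the claim.

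The main obstacle is the log-ratio step. It requires the lower bound $\kappa_t$ on both $p_t$ and $p^*$, which is why the statement starts at $t_e$ and carries $1/\kappa_t$. The other delicate point is getting the softmax Lipschitz constant $\beta/2$ so that the constants match. If the mean-value-theorem bound on the Jacobian proves awkward, I would bound the softmax difference directly, coordinatewise via the integral form along the segment between $\hat R_\tau$ and $R^*$.
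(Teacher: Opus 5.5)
Your proposal is correct and follows the paper's proof step for step. Both use the centered-log-ratio identity from $\sigma$-normalization and the mean-value bound to get the factor $\frac{2}{\beta\kappa_t}$, then split through $\bar p_t$. Both apply Azuma--Hoeffding with a union bound over actions and time steps, and average the $\beta/2$ softmax Lipschitz estimate against the $f$-guarantee. Your Jacobian computation (the $\ell_\infty$ operator norm is $2\beta\max_a p(a)(1-p(a)) \le \beta/2$) also justifies the Lipschitz constant, which the paper only asserts.
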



\textbf{Discussion.} As an example, suppose the learner satisfies a $\sqrt{t}$-guarantee. Then the above becomes:
$$D_{\ell_{\infty}}(R^{*}, R_{t_\text{e}:T}) \leq \underbrace{O\left( \sqrt{T \log(|\mathcal{A}|T/\epsilon)}\right)}_{\text{predictor-learner error}} + \underbrace{O\left(\sqrt{T}\right)}_{\text{learner-oracle error}}$$
i.e., in this case, the error in predicting the learner's estimate ($\sum_t \norm{R_t - \hat{R}_t}_{\infty}$) dominates the error in the learner's estimate of the true reward function ($\sum_t \norm{\hat{R}_t - R^{*}}_{\infty}$). The assumption that our reward functions are $\sigma$-normalized is helpful for identifiability reasons; otherwise, the averaging strategy $R_t$ (\Cref{defn:stateless_average_predictor}) is only unique up to translation, which makes the value of measure $\norm{R_t - R^{*}}_{\infty}$ ambiguous without fixing a unique $R_t$ via $\sigma$-normalization.

\paragraph{Stateful case}

Let $Q_t$, $\hat{Q}_t$, and $Q^{*}$ denote the predictor's, learner's, and ground-truth action-value functions respectively. The learner is modeled as selecting actions via $a_t \sim \pi^{\beta}(\hat{Q}_t(s_t, \cdot)) =: \hat{p}_t(\cdot|s_t)$, and is also assumed to satisfy the $f$-guarantee of \Cref{eqn:statefulboltzmannhumanlearning}. The prediction $Q_t$ at time step $t$ can only depend on $(s_{<t}, a_{<t}) \sim \hat{\mu}$, with $\hat{\mu}$ capturing the learner-environment interaction:
$$\hat{\mu}(s_{<t}, a_{<t}) = \prod_{\tau=1}^{t-1} \mu(s_{\tau}|s_{\tau-1}, a_{\tau-1}) \hat{p}_{\tau}(a_{\tau}|s_{\tau})$$
for the environment's transition distribution $\mu(s_{\tau}|s_{\tau-1}, a_{\tau-1})$ (and $\mu(s_1|s_0, a_0) \equiv \mu(s_1)$). Similarly to the stateless case, we will assume $Q_t$ and $Q^{*}$ are $\sigma$-normalized in the following sense:
\begin{definition}[stateful $\sigma$-normalized]
We say that $Q: \mathcal{S} \times \mathcal{A} \to \mathbb{R}$ is $\sigma$-normalized (for some $\sigma: \mathcal{S} \to \mathbb{R}$) iff $Q$ satisfies
$\sum_{a \in \mathcal{A}} Q(s, a) = \sigma(s), \quad  \forall \; s \in \mathcal{S}$
\end{definition}

In the stateful case, we define the averaging strategy as follows:
\begin{definition}[stateful averaging strategy]
\label{defn:stateful_average_predictor}
Define the averaging strategy as selecting the $\sigma$-normalized $Q_t$ that satisfies
$\pi^{\beta}(Q_t)(a|s) = \frac{N_{t-1}(s, a)}{N_{t-1}(s)} =: p_t(a|s)$
at time step $t > t_{\text{e}}(s)$.
Such $Q_t$ exists for all $t > t_{\text{e}}(s)$ and is unique (by $\sigma$-normalization), with the explicit form
$$Q_t(s, a) = \frac{1}{\beta}\left(\log p_t(a|s) - \frac{1}{|\mathcal{A}|} \sum_{a' \in \mathcal{A}} \log p_t(a'|s)\right) + \frac{1}{|\mathcal{A}|}$$
\end{definition}
We will again find that our results are independent of $\sigma$. The results for the stateful case follow an identical structure to the stateless case, with analogous intermediate lemmas and a final proposition:
\begin{proposition}
\label{prop:stateful_linfty_empirical_average}
Suppose that the predictor $Q_t$ follows the averaging strategy (\Cref{defn:stateful_average_predictor}), and that the learner $\hat{Q}_t$ satisfies the $f$-guarantee (\Cref{eqn:statefulboltzmannhumanlearning}). Then, with probability $1-\epsilon$, the visit-weighted cumulative prediction error can be bounded as:
\begin{align*}
\sum_{s \in \mathcal{S}} \sum_{t=t_{\text{e}}(s)}^{T} v_t(s) \, \norm{Q_t(s, \cdot) - Q^{*}(s, \cdot)}_{\infty} &\leq  \frac{2}{\beta} \overbrace{ \sum_{s \in \mathcal{S}} \sum_{t=t_{\text{e}}(s)}^{T}\frac{1}{\kappa_t(s)} \sqrt{\frac{2\log(2|\mathcal{S}||\mathcal{A}|(T-1)/\epsilon)}{t-1}}}^{\sim \, O\left(|\mathcal{S}| \sqrt{T \log (|\mathcal{S}| |\mathcal{A}|T/\epsilon)}\right)}\\
&\qquad+ \sum_{s \in \mathcal{S}} \sum_{t=t_{\text{e}}(s)}^{T} \frac{1}{\kappa_t(s)} \frac{f(N_{t-1}(s))}{\sqrt{(t-1) N_{t-1}(s)}}
\end{align*}
where $v_t(s) := \sqrt{N_{t-1}(s)/(t-1)}$ and $t_{\text{e}}(s) := \min(t: N_{t-1}(s, a) > 0 \; \forall \; a \in \mathcal{A})$, with $f$ capturing the learning rate of a Boltzmann learner as described in \Cref{subsubsec:boltzhuman}.
\prooflink{app:proof_prop_stateful_linfty_empirical_average}
\end{proposition}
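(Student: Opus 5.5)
The proof will mirror the stateless argument of \Cref{prop:stateless_linfty_empirical_average}, applied state by state and then summed over $s \in \mathcal{S}$. For a fixed state $s$ and $t \geq t_{\text{e}}(s)$, I would use the triangle inequality to split
$$\norm{Q_t(s,\cdot) - Q^{*}(s,\cdot)}_{\infty} \leq \norm{Q_t(s,\cdot) - \bar{Q}_t(s,\cdot)}_{\infty} + \norm{\bar{Q}_t(s,\cdot) - Q^{*}(s,\cdot)}_{\infty}.$$
Here $\bar{Q}_t(s,\cdot)$ is the $\sigma$-normalized action-value whose Boltzmann policy equals the averaged learner policy $\bar{p}_t(\cdot|s) := \frac{1}{N_{t-1}(s)}\sum_{\tau<t: s_\tau = s}\hat{p}_\tau(\cdot|s)$. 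Alternatively, I could compare $p_t(\cdot|s)$ directly with $p^{*}(\cdot|s) := \pi^\beta(Q^{*})(\cdot|s)$ through the intermediate $\bar p_t$.

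The key deterministic step is a Lipschitz bound for the inverse Boltzmann map restricted to $\sigma$-normalized functions. By the explicit form in \Cref{defn:stateful_average_predictor}, $Q(s,a) = \frac{1}{\beta}(\log p(a|s) - \text{mean}_{a'}\log p(a'|s)) + \text{const}$. The centering step at most doubles the sup-norm, so
$$\norm{Q_t(s,\cdot) - Q^{*}(s,\cdot)}_{\infty} \leq \frac{2}{\beta}\max_a |\log p_t(a|s) - \log p^{*}(a|s)| \leq \frac{2}{\beta \kappa_t(s)} \norm{p_t(\cdot|s) - p^{*}(\cdot|s)}_{\infty},$$
using $|\log x - \log y| \leq |x-y|/\min(x,y)$. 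This is where $\kappa_t(s)$ enters, and the $\sigma$-independence follows. I would then bound $\norm{p_t - p^{*}}_\infty \leq \norm{p_t - \bar p_t}_\infty + \norm{\bar p_t - p^{*}}_\infty$.

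\textbf{Concentration term.} For the first piece, $N_{t-1}(s,a) - \sum_{\tau<t, s_\tau=s}\hat{p}_\tau(a|s)$ is a martingale with bounded increments, indexed by visits to $s$. I would apply Azuma–Hoeffding, evaluated at each possible visit count, and take a union bound over $a$, $s$, and $t \leq T$, which gives the $\log(2|\mathcal{S}||\mathcal{A}|(T-1)/\epsilon)$ factor. This yields a deviation of order $\sqrt{2\log(\cdot)/N_{t-1}(s)}$. Multiplying by $v_t(s) = \sqrt{N_{t-1}(s)/(t-1)}$ cancels the $N_{t-1}(s)$ and leaves $\sqrt{2\log(\cdot)/(t-1)}$, which is exactly the first term.

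\textbf{Learning term.} For the second piece, the softmax is $\beta$-Lipschitz-ish from $\ell_\infty$ to probabilities, and the constants must be tracked so that the result matches the stated term without the $2/\beta$ factor. Using this together with \Cref{eqn:statefulboltzmannhumanlearning}, I get
$$\norm{\bar p_t - p^{*}}_\infty \lesssim \frac{\beta}{N_{t-1}(s)}\sum_{\tau<t, s_\tau=s}\norm{\hat{Q}_\tau(s,\cdot) - Q^{*}(s,\cdot)}_{\infty} \leq \frac{\beta f(N_{t-1}(s))}{N_{t-1}(s)}.$$
Multiplying by $\frac{2}{\beta\kappa_t(s)} v_t(s)$ gives $\frac{f(N_{t-1}(s))}{\kappa_t(s)\sqrt{(t-1)N_{t-1}(s)}}$, up to the constant. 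Summing over $t$ and $s$ then gives the claim. The stated $O(|\mathcal{S}|\sqrt{T\log})$ order follows from $\sum_t (t-1)^{-1/2} = O(\sqrt T)$ when $\kappa_t(s)$ is bounded below.

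\textbf{Main obstacle.} I expect the concentration step to be the hardest part. The visit times to $s$ are random and adaptive, so Azuma must be applied to the stopped or time-changed martingale over the visits to $s$, with a union bound over all possible counts $n \leq T-1$. The constant bookkeeping in the learning term (the factor $\beta/2$ from the softmax Lipschitz constant against the $2/\beta$ prefactor) is a secondary nuisance.
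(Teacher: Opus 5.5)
Your proposal is correct and follows the paper's proof essentially step for step: the same $\frac{2}{\beta\kappa_t(s)}$ bound on $\norm{Q_t(s,\cdot)-Q^{*}(s,\cdot)}_{\infty}$ via centering and the mean-value log bound, and the same split through $\bar p_t(\cdot|s)$. The first piece uses Azuma--Hoeffding plus a union bound, the second uses softmax Lipschitzness plus the $f$-guarantee, and then you multiply by $v_t(s)$ and sum. The constant you left open is the one the paper uses: softmax is $\frac{1}{2}$-Lipschitz from $\ell_\infty$ to $\ell_\infty$ (each Jacobian row has $\ell_1$-norm $2p_a(1-p_a)\le\frac{1}{2}$), so $\norm{\bar p_t(\cdot|s)-p^{*}(\cdot|s)}_{\infty}\le\frac{\beta}{2}\frac{f(N_{t-1}(s))}{N_{t-1}(s)}$ and the $\frac{2}{\beta}$ cancels to give coefficient $1$. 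Your visit-indexed martingale with a union bound over counts $n\le T-1$ is a more careful version of the paper's concentration step, which simply conditions on $N_{t-1}(s)=n$.
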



\textbf{Discussion.} Suppose that the learner learns at a rate of $f(N_t(s)) \propto \sqrt{N_{t}(s)}$. Then we have:
$$\sum_{s \in \mathcal{S}} \sum_{t=t_{\text{e}}(s)}^{T} v_t(s) \, \norm{Q_t(s, \cdot) - Q^{*}(s, \cdot)}_{\infty} \leq \underbrace{O\left(|\mathcal{S}| \sqrt{T \log(|\mathcal{S}| |\mathcal{A}| T/\epsilon)}\right)}_{\text{predictor-learner error}} + \underbrace{O\left(|\mathcal{S}| \sqrt{T}\right)}_{\text{learner-oracle error}}$$
Only states $s$ that are eventually explored during the $T$-length horizon will contribute to this cumulative error (by the definition of $t_e(s)$), i.e. we do not necessarily require our MDP to be ergodic.

\section{Conclusion}
\label{sec:conclusion}

In this paper, we have formalized the problem of learning the preferences of an agent that themselves is learning to act optimally. 
We showed that for a no-regret learner, we are unable to say anything \textit{in general} about the structure of their preferences other than what their preferred action is in each state. This is an intuitive result: we need the learner to somewhat indicate their preferences for suboptimal actions if we want to have any hope of learning their full preference structure, and no-regret does not generally guarantee this. Under alternative assumptions on the learner (e.g. Boltzmann rationality), we are able to obtain general guarantees on the $\ell_\infty$ error between the ground-truth and predicted reward function based on how often different states were visited under the learner's policy.

\textbf{Choices of learner models and evaluation measures.} 
 Making the right assumptions is fundamental if we want our insights to be relevant to practice. One ambition is to have learner models that capture the bounded rationality of humans, which we note is something that the Boltzmann rationality model does not respect: for a learner to be perfectly Boltzmann, they would need to perfectly estimate the reward/action-value function. We leave the extension of our results to such learner models for future work. 
We note that our norm-based errors require that the full preference structure is captured, as intended for our downstream application. For example, while the best-response distance is a manageable metric to optimize for, it only guarantees that we identify the best action, which might not be very useful in practice. For example, we may instead want to recover reward functions that are \textit{robust}: ones that if optimized in a different environment, would still lead to a desired behavior.

\textbf{Future work.} Plenty of interesting questions remain open. For example, can we find efficient algorithms that minimize the KL-Boltzmann distance defined in \Cref{subsubsec:klbp}? Are our bounds tight? Do our results cleanly translate to the case of stochastic rewards? What could we gain if we were able to act through the expert, and not only observe, as considered in \citet{chan2019bandit}? 

Ultimately, we believe that AI systems will be deployed in increasingly high-stakes scenarios. In such cases, humans may not be perfect actors, and developing methods that can learn from a multitude of data collected from suboptimal \& learning humans will be of increasing importance. We hope that our work lays solid foundations for the study of these challenges.

\section{Acknowledgments}
We thank Nika Hagthalab and Annie Ulichney for helpful feedback on drafts.

This work was supported by a gift from Open Philanthropy (now Coefficent Giving) to the Center for Human-Compatible AI (CHAI)
at UC Berkeley. Karim Abdel Sadek and Mark Bedaywi are supported by the Cooperative AI PhD Fellowship.

\bibliography{iclr2026_malgai}

@inproceedings{ng1999policy,
  title={Policy invariance under reward transformations: Theory and application to reward shaping},
  author={Ng, Andrew Y and Harada, Daishi and Russell, Stuart},
  booktitle={Icml},
  volume={99},
  pages={278--287},
  year={1999},
  organization={Citeseer}
}

@book{sutton2018rl,
author = {Sutton, Richard S. and Barto, Andrew G.},
title = {Reinforcement Learning: An Introduction},
year = {2018},
isbn = {0262039249},
publisher = {A Bradford Book},
address = {Cambridge, MA, USA},
}

@article{silver2016mastering,
  title={Mastering the game of Go with deep neural networks and tree search},
  author={Silver, David and Huang, Aja and Maddison, Chris J and Guez, Arthur and Sifre, Laurent and Van Den Driessche, George and Schrittwieser, Julian and Antonoglou, Ioannis and Panneershelvam, Veda and Lanctot, Marc and others},
  journal={nature},
  volume={529},
  number={7587},
  pages={484--489},
  year={2016},
  publisher={Nature Publishing Group}
}

@article{jumper2021highly,
  title={Highly accurate protein structure prediction with AlphaFold},
  author={Jumper, John and Evans, Richard and Pritzel, Alexander and Green, Tim and Figurnov, Michael and Ronneberger, Olaf and Tunyasuvunakool, Kathryn and Bates, Russ and {\v{Z}}{\'\i}dek, Augustin and Potapenko, Anna and others},
  journal={nature},
  volume={596},
  number={7873},
  pages={583--589},
  year={2021},
  publisher={Nature Publishing Group UK London}
}

@article{mnih2013playing,
  title={Playing atari with deep reinforcement learning},
  author={Mnih, Volodymyr and Kavukcuoglu, Koray and Silver, David and Graves, Alex and Antonoglou, Ioannis and Wierstra, Daan and Riedmiller, Martin},
  journal={arXiv preprint arXiv:1312.5602},
  year={2013}
}

@article{christiano2017deep,
  title={Deep reinforcement learning from human preferences},
  author={Christiano, Paul F and Leike, Jan and Brown, Tom and Martic, Miljan and Legg, Shane and Amodei, Dario},
  journal={Advances in neural information processing systems},
  volume={30},
  year={2017}
}

@article{silver2021reward,
  title={Reward is enough},
  author={Silver, David and Singh, Satinder and Precup, Doina and Sutton, Richard S},
  journal={Artificial intelligence},
  volume={299},
  pages={103535},
  year={2021},
  publisher={Elsevier}
}

@inproceedings{ng2000algorithms,
  title={Algorithms for inverse reinforcement learning.},
  author={Ng, Andrew Y and Russell, Stuart and others},
  booktitle={Icml},
  volume={1},
  number={2},
  pages={2},
  year={2000}
}

@article{ziebart2010modeling,
  title={Modeling interaction via the principle of maximum causal entropy},
  author={Ziebart, Brian D and Bagnell, J Andrew and Dey, Anind K},
  year={2010},
  publisher={Carnegie Mellon University}
}

@article{skalse2023starc,
  title={STARC: A general framework for quantifying differences between reward functions},
  author={Skalse, Joar and Farnik, Lucy and Motwani, Sumeet Ramesh and Jenner, Erik and Gleave, Adam and Abate, Alessandro},
  journal={arXiv preprint arXiv:2309.15257},
  year={2023}
}

@article{arora2012online,
  title={Online bandit learning against an adaptive adversary: from regret to policy regret},
  author={Arora, Raman and Dekel, Ofer and Tewari, Ambuj},
  journal={arXiv preprint arXiv:1206.6400},
  year={2012}
}

@article{syed2007game,
  title={A game-theoretic approach to apprenticeship learning},
  author={Syed, Umar and Schapire, Robert E},
  journal={Advances in neural information processing systems},
  volume={20},
  year={2007}
}

@inproceedings{skalse2023invariance,
  title={Invariance in policy optimisation and partial identifiability in reward learning},
  author={Skalse, Joar Max Viktor and Farrugia-Roberts, Matthew and Russell, Stuart and Abate, Alessandro and Gleave, Adam},
  booktitle={International Conference on Machine Learning},
  pages={32033--32058},
  year={2023},
  organization={PMLR}
}

@INPROCEEDINGS{chan2019bandit,
  author={Chan, Lawrence and Hadfield-Menell, Dylan and Srinivasa, Siddhartha and Dragan, Anca},
  booktitle={2019 14th ACM/IEEE International Conference on Human-Robot Interaction (HRI)},
  title={The Assistive Multi-Armed Bandit},
  year={2019},
  volume={},
  number={},
  pages={354-363},
  doi={10.1109/HRI.2019.8673234}}

@inproceedings{abbeel2004apprenticeship,
  title={Apprenticeship learning via inverse reinforcement learning},
  author={Abbeel, Pieter and Ng, Andrew Y},
  booktitle={Proceedings of the twenty-first international conference on Machine learning},
  pages={1},
  year={2004}
}

@inproceedings{evans2016learning,
  title={Learning the preferences of ignorant, inconsistent agents},
  author={Evans, Owain and Stuhlm{\"u}ller, Andreas and Goodman, Noah},
  booktitle={Proceedings of the AAAI Conference on Artificial Intelligence},
  volume={30},
  number={1},
  year={2016}
}

@inproceedings{ramachandran2007bayesian,
  title={Bayesian Inverse Reinforcement Learning.},
  author={Ramachandran, Deepak and Amir, Eyal},
  booktitle={IJCAI},
  volume={7},
  pages={2586--2591},
  year={2007}
}

@inproceedings{metelli2023towards,
  title={Towards theoretical understanding of inverse reinforcement learning},
  author={Metelli, Alberto Maria and Lazzati, Filippo and Restelli, Marcello},
  booktitle={International Conference on Machine Learning},
  pages={24555--24591},
  year={2023},
  organization={PMLR}
}

@inproceedings{russell1998learning,
  title={Learning agents for uncertain environments},
  author={Russell, Stuart},
  booktitle={Proceedings of the eleventh annual conference on Computational learning theory},
  pages={101--103},
  year={1998}
}

@article{cao2021identifiability,
  title={Identifiability in inverse reinforcement learning},
  author={Cao, Haoyang and Cohen, Samuel and Szpruch, Lukasz},
  journal={Advances in Neural Information Processing Systems},
  volume={34},
  pages={12362--12373},
  year={2021}
}

@inproceedings{metelli2021provably,
  title={Provably efficient learning of transferable rewards},
  author={Metelli, Alberto Maria and Ramponi, Giorgia and Concetti, Alessandro and Restelli, Marcello},
  booktitle={International Conference on Machine Learning},
  pages={7665--7676},
  year={2021},
  organization={PMLR}
}

@article{reddy2018you,
  title={Where do you think you're going?: Inferring beliefs about dynamics from behavior},
  author={Reddy, Sid and Dragan, Anca and Levine, Sergey},
  journal={Advances in Neural Information Processing Systems},
  volume={31},
  year={2018}
}

@article{laidlaw2021uncertain,
  title={Uncertain decisions facilitate better preference learning},
  author={Laidlaw, Cassidy and Russell, Stuart},
  journal={Advances in Neural Information Processing Systems},
  volume={34},
  pages={15070--15083},
  year={2021}
}

@article{laidlaw2022boltzmann,
  title={The boltzmann policy distribution: Accounting for systematic suboptimality in human models},
  author={Laidlaw, Cassidy and Dragan, Anca},
  journal={arXiv preprint arXiv:2204.10759},
  year={2022}
}

@book{luce1959individual,
  title={Individual choice behavior},
  author={Luce, R Duncan and others},
  volume={4},
  year={1959},
  publisher={Wiley New York}
}

@article{jeon2020reward,
  title={Reward-rational (implicit) choice: A unifying formalism for reward learning},
  author={Jeon, Hong Jun and Milli, Smitha and Dragan, Anca},
  journal={Advances in Neural Information Processing Systems},
  volume={33},
  pages={4415--4426},
  year={2020}
}

@article{gleave2020quantifying,
  title={Quantifying differences in reward functions},
  author={Gleave, Adam and Dennis, Michael and Legg, Shane and Russell, Stuart and Leike, Jan},
  journal={arXiv preprint arXiv:2006.13900},
  year={2020}
}

@article{stiennon2020learning,
  title={Learning to summarize with human feedback},
  author={Stiennon, Nisan and Ouyang, Long and Wu, Jeffrey and Ziegler, Daniel and Lowe, Ryan and Voss, Chelsea and Radford, Alec and Amodei, Dario and Christiano, Paul F},
  journal={Advances in neural information processing systems},
  volume={33},
  pages={3008--3021},
  year={2020}
}

@article{bai2022training,
  title={Training a helpful and harmless assistant with reinforcement learning from human feedback},
  author={Bai, Yuntao and Jones, Andy and Ndousse, Kamal and Askell, Amanda and Chen, Anna and DasSarma, Nova and Drain, Dawn and Fort, Stanislav and Ganguli, Deep and Henighan, Tom and others},
  journal={arXiv preprint arXiv:2204.05862},
  year={2022}
}
\bibliographystyle{iclr2026_conference}

\appendix

\section{Proofs for Main Results}
\label{app:proofs_main}

\subsection{Proof of \Cref{lemma: imp_noregret_norms}}
\label{app:proof_lemma_imp_noregret_norms}
\begin{proof}
Suppose the learner always plays action $a_t = a_1$ for all $t \in [T]$. We can see that this learner is no-regret \& optimal under at least two possible choices of ground-truth reward functions: $R^{*} = R^*_1 := \frac{1}{|\mathcal{A}|}(1,1, \dots, 1)$ and $R^{*} = R^*_2 := (1, 0, \dots, 0)$ (and in fact, under any reward function $R^*$ such that $R^*(a_1) \geq R^*(a)$ for all $a \in \mathcal{A}$). This means that in the worst-case of an adversarial choice for $R^{*}$, the best prediction strategy will always incur linear error since
\begin{align*}
\min_{R_{1:T}} \max_{R^{*}} D_{\ell_2}(R^{*}, R_{1:T}) &\geq \min_{R_{1:T}} \max_{R^{*} \in \{R_1^{*}, R_2^{*}\}} D_{\ell_2}(R^{*}, R_{1:T})\\
&\geq \frac{1}{2} \min_{R_{1:T}} \left(D_{\ell_2}(R_1^{*}, R_{1:T}) + D_{\ell_2}(R_2^{*}, R_{1:T})\right)\\
&\geq \frac{T}{2} \norm{R_1^{*} - R_2^{*}}_2 = \Theta(T)
\end{align*}
where in the third line we have made use of the triangle inequality.
\end{proof}

\subsection{Proof of \Cref{prop:kl_imp_noregret_human}}
\label{app:proof_prop_kl_imp_noregret_human}

We will need the following result to prove \Cref{prop:kl_imp_noregret_human}.
\begin{lemma}
\label{lemma:kl_bounds_l2}
Given two probability vectors $p,q$ it follows that
\[
\mathrm{KL}(p\|q)\;\ge\;\frac12\,\|p-q\|_2^2.
\]
\end{lemma}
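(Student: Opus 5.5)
The plan is to go through Pinsker's inequality and then compare norms. Pinsker gives
\[
\mathrm{KL}(p\|q) \ge 2\,\mathrm{TV}(p,q)^2 = \tfrac12 \|p-q\|_1^2 .
\]
For any vector $x$ we have $\|x\|_1 \ge \|x\|_2$, since $\|x\|_1^2 = \sum_i x_i^2 + \sum_{i\neq j}|x_i||x_j| \ge \|x\|_2^2$. Combining these yields $\mathrm{KL}(p\|q)\ge \frac12\|p-q\|_1^2 \ge \frac12 \|p-q\|_2^2$, which is the claim. The lemma places no support assumptions on $p$ and $q$. If $q_i=0<p_i$ for some $i$, the KL is $+\infty$ and the inequality holds trivially. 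Otherwise we use the convention $0\log 0 = 0$.

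Pinsker's inequality is standard, so I would simply cite it. If a self-contained argument is wanted, I would give the usual two-step proof.
\begin{itemize}
\item First reduce to two-point distributions via the data-processing inequality. Let $A=\{i: p_i\ge q_i\}$ and push both $p$ and $q$ forward through the indicator of $A$. This does not increase KL, and it preserves total variation, which equals $p(A)-q(A)$.
\item Then prove the binary case $a\log\frac ab+(1-a)\log\frac{1-a}{1-b}\ge 2(a-b)^2$. Fix $a$ and set $g(b)$ equal to the left side minus the right side. Then $g(a)=0$ and $g'(b)=(b-a)\bigl(\tfrac{1}{b(1-b)}-4\bigr)$. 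This has the sign of $b-a$ because $b(1-b)\le \tfrac14$, so $g$ is minimized at $b=a$.
\end{itemize}

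The only step that needs care is the constant bookkeeping. Pinsker is usually stated as $\mathrm{TV}\le\sqrt{\mathrm{KL}/2}$ with $\mathrm{TV}=\frac12\|p-q\|_1$. This must be converted correctly to $\frac12\|p-q\|_1^2$, and the conversion is exactly what leaves the factor $\frac12$ in front of $\|p-q\|_2^2$. Apart from that the argument is routine. The lemma will then be applied with $p=\pi^\beta(R_t)$ and $q=\pi^\beta(R^*)$ in the proof of \Cref{prop:kl_imp_noregret_human}.
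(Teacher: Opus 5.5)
Your proposal is correct and is essentially the paper's proof: Pinsker's inequality in the form $\mathrm{KL}(p\|q)\ge 2\,\mathrm{TV}(p,q)^2=\tfrac12\|p-q\|_1^2$, followed by $\|x\|_1\ge\|x\|_2$. Your optional self-contained derivation of Pinsker is also correct (data processing onto $\{p\ge q\}$, then the binary case via $g'(b)=(b-a)\bigl(\tfrac{1}{b(1-b)}-4\bigr)$), and your non-strict $\ge$ in the norm comparison is more accurate than the paper's ``$\ell_1>\ell_2$ always''.
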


\begin{proof}
Pinsker's inequality gives $
\mathrm{TV}(p,q)\;\le\;\sqrt{\frac{\mathrm{KL}(p\|q)}{2}},
$
where $\mathrm{TV}(p,q)=\frac12\|p-q\|_1$. Rearranging, we have $\mathrm{KL}(p\|q)\;\ge\;2\,\mathrm{TV}(p,q)^2 \;=\;\frac12\,\|p-q\|_1^2.$. Since $\ell_1 > \ell_2$ always,
\[
\mathrm{KL}(p\|q)\;\ge\;\frac12\,\|p-q\|_1^2 \;\ge\;\frac12\,\|p-q\|_2^2.
\]
as desired.
\end{proof}

\Cref{prop:kl_imp_noregret_human} then follows directly by applying \Cref{lemma:kl_bounds_l2} to the impossibility result in \Cref{lemma: imp_noregret_norms}.


\subsection{Proof of \Cref{prop:kl_imp_noregret_human_infty}}
\label{app:proof_prop_kl_imp_noregret_human_infty}
\begin{proof}
Follows directly from \Cref{lemma: imp_noregret_norms}, since the $\ell_2$ is always bounded by the $\ell_\infty$.
\end{proof}

\subsection{Proof of \Cref{prop:stateless_easy_noregret}}
\label{app:proof_prop_stateless_easy_noregret}
\begin{proof}
The idea is simple. The algorithm plays an arbitrary action $b \in \mathcal{A}$ at time step 1. Then, at each iteration, the previous action $a_{t - 1}$ is maintained. The algorithm picks the reward function $R_t(a) = \mathds{1}(a = a_{t - 1})$. Then, the best response under reward $R_t$ is to play action $a_{t - 1}$. This means regret is
    \begin{align*}
        \BRD(R^*, R_{1:T}) &= (\max_{a} R^*(a) - R^*(b)) + \sum_{t = 2}^T (\max_{a} R^*(a) - R^*(a_{t - 1})) \\
        &\leq (1 - 0) + \sum_{t = 1}^{T - 1} (\max_{a} R^*(a) - R^*(a_{t})) \\
        &\leq 1 + f(T).
    \end{align*}

    By playing the reward that puts all the weight on the previous action, we are able to match the learner's regret up to a constant additive factor of 1.
\end{proof}

\subsection{Proof of the lemma from \citet{syed2007game}}
\label{app:proof_syed2007_game_lemma}
\begin{proof}
Because $R$ is the minimax solution, it must satisfy, for all other reward functions $R'$ and all possible policies $\pi \in \Pi$,
    \begin{equation*}
        \E[R(\pi) - R(\pi_E)] \leq R'(\pi) - R'(\pi_E).
    \end{equation*}
    In particular, it must hold for the true underlying reward function $R^*$, and for the optimal policy $\pi_R = \argmax_{\pi \in \Pi} \mathbb{E}[R(\pi)]$ under $R$,
    \begin{equation*}
        \E[R(\pi_R) - R(\pi_E)] \leq R^*(\pi_R) - R^*(\pi_E).
    \end{equation*}
    By definition, the left hand side is greater than zero. Therefore,
    \begin{equation*}
        R^*(\pi_E) \geq R^*(\pi_R),
    \end{equation*}
    as claimed.
\end{proof}

\subsection{Proof of \Cref{prop:lowkl_lowBR}}
\label{app:proof_prop_lowkl_lowBR}
\begin{proof}
First, use \Cref{prop:reduction_end}, and note that we have that an algorithm that only predicts a single $R_t$ such that $T \cdot \KLBP(R^*, R_t) \leq f(T)$.
Now, let's show that if $\KLBP(R^*, R_t) \leq \epsilon$, then we can bound $\BRD(R^{*}, R_{1:T})$.

Define $a^* = \argmax_a R^*(a)$, and let $p^{*}(a) := \pi^\beta(R^*)(a)$ and $p_t(a) = \pi^{\beta}(R_t)(a)$. Note that

$$\max_{a^{*} \in \mathcal{A}} (R^{*}(a^{*}) - R^{*}(a^{R_t})) = \frac{1}{\beta} \log \frac{p^{*}(a^*)}{p^{*}(a^{R_t})}$$.

for each $t$. We note that the TV distance is bounded by the KL, which implies that
\[
TV(p_t, p^{*}) \leq \sqrt{\mathrm{KL}(p_t, p^{*})/2} = \delta
\]

It is easy to see that for all actions, $|p_t(a) - p^{*}(a)| \leq TV(p_t, p^{*}) \leq \delta $. Note that now for all actions, $p^{*}(a) \geq p_t(a) - \delta$ and  $p^{*}(a) \leq p_t(a) + \delta$. This implies by definition that $p^{*}(a^*) \leq p_t(a^{R_t}) + \delta$ and $p^{*}(a^{R_t}) \geq p_t(a^{R_t}) - \delta$. So we can note that if $p_t(a^{R_t}) > \delta$, then
$$\
\frac{p^{*}(a^*)}{p^{*}(a^{R_t})} \leq \frac{p_t(a^{R_t})+\delta}{p_t(a^{R_t})-\delta}
$$
Additionally, note that if $\delta < \frac{1}{m}$ (where $m = |\mathcal{A}|$), then
$$
\frac{p^{*}(a^*)}{p^{*}(a^{R_t})} \leq \frac{\frac{1}{m}+\delta}{\frac{1}{m}-\delta} = \frac{1 + m\delta}{1-m\delta}
$$
Thus finally inducing that

$$
\BRD(R^{*}, R_{1:T}) \leq \frac{T}{\beta} \log \frac{1 +  m\delta}{1-m\delta}
$$

This shows that if the KL is bounded, the best response distance is also bounded when we predict a reward at the end. By applying \Cref{prop:reduction_end_reverse}, we are done.
\end{proof}

\subsection{Proof of \Cref{prop:stateless_linfty_empirical_average}}
\label{app:proof_prop_stateless_linfty_empirical_average}

We will now work towards proving a bound on $D_{\ell_{\infty}}(R^{*}, R_{1:T})$ (\Cref{prop:stateless_linfty_empirical_average}), starting with two lemmas:

\begin{lemma}
\label{lem:pred_pbar_infty_bound}
Suppose that $R_t$ follows an averaging strategy (\Cref{defn:stateless_average_predictor}). Then with probability $1-\epsilon$,
\begin{equation*}
\norm{p_t - \bar{p}_t}_{\infty} < \sqrt{\frac{2\log(2|\mathcal{A}|(T-1)/\epsilon)}{t-1}}
\end{equation*}
for $t > 1$, defining the time-averaged learner policy:
\begin{equation*}
\bar{p}_t(a) := \mathbb{E}_{\hat{p}(a_{<t})}[p_t(a)] = \frac{1}{t-1} \sum_{\tau=1}^{t-1} \hat{p}_{\tau}(a)
\end{equation*}
\end{lemma}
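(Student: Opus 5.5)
Fix an action $a$ and a time $t>1$. Write
\[
p_t(a)-\bar p_t(a) = \frac{1}{t-1}\sum_{\tau=1}^{t-1}\bigl(\mathds{1}(a_\tau=a)-\hat p_\tau(a)\bigr).
\]
The plan is to show this is a normalized martingale and apply Azuma--Hoeffding. The policy $\hat p_\tau$ may depend on the past: it is determined by $\hat R_\tau$, which the learner builds from $(a_{<\tau}, R^*(a_{<\tau}))$. In the stateless deterministic-reward setting, $R^*$ is fixed, so $\hat p_\tau$ is measurable with respect to $\mathcal F_{\tau-1}:=\sigma(a_1,\dots,a_{\tau-1})$. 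Consequently
\[
X_\tau := \mathds{1}(a_\tau=a)-\hat p_\tau(a)
\]
satisfies $\mathbb E[X_\tau\mid\mathcal F_{\tau-1}]=0$, and $X_\tau$ takes values in an interval of length $1$, namely $[-\hat p_\tau(a),\,1-\hat p_\tau(a)]$. Note also that $\bar p_t(a)$ as written is an average of the (possibly random) $\hat p_\tau(a)$, not a literal expectation. This is the quantity we should compare against, and it is exactly the compensator of the martingale.

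Applying Azuma--Hoeffding to the sum $\sum_{\tau<t} X_\tau$, with increments in ranges of length $c_\tau=1$, gives
\[
\Pr\Bigl(\bigl|\textstyle\sum_{\tau<t}X_\tau\bigr|\ge (t-1)\eta\Bigr)\le 2\exp\bigl(-(t-1)\eta^2/2\bigr).
\]
This uses the conservative form $2\exp(-s^2/(2\sum c_\tau^2))$, which matches the constant $2$ in the claimed radius. Setting
\[
\eta=\sqrt{2\log(2|\mathcal A|(T-1)/\epsilon)/(t-1)}
\]
makes the failure probability for the pair $(a,t)$ equal to $\epsilon/(|\mathcal A|(T-1))$.

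Finally, we take a union bound over all $a\in\mathcal A$ and all $t\in\{2,\dots,T\}$, which is $|\mathcal A|(T-1)$ events. With probability at least $1-\epsilon$, the bound then holds simultaneously for every action and every time. Taking the maximum over $a$ gives the $\ell_\infty$ statement uniformly in $t$. This uniformity is what the later summation in \Cref{prop:stateless_linfty_empirical_average} requires.

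The main obstacle is justifying the martingale structure, i.e.\ that $\hat p_\tau$ is predictable with respect to the observed action history. The learner's information also includes the reward observations, but these are deterministic functions of the actions under a fixed $R^*$, so the filtration generated by the actions suffices. If $\hat R_\tau$ were allowed extra internal randomness, we would instead enlarge the filtration to include it; the argument is unchanged, since $a_\tau$ is still drawn from $\hat p_\tau$ conditionally on that enlarged past. The strictness of the inequality in the statement is cosmetic and comes from the strict tail event.
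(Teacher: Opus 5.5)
Your proposal is correct and follows the paper's proof essentially step for step. You write $p_t(a)-\bar p_t(a)$ as a normalized martingale with unit-bounded increments, apply Azuma--Hoeffding to get the tail $2\exp(-(t-1)\eta^2/2)$, choose the same radius, and union bound over the $|\mathcal A|(T-1)$ action--time pairs. Your extra care in justifying predictability of $\hat p_\tau$, and in reading $\bar p_t$ as the random compensator rather than a literal expectation, fills in points the paper leaves implicit.
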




\label{app:proof_lem_pred_pbar_infty_bound}
\begin{proof}
First note that we can write
$$p_t(a) - \bar{p}_t(a) = \frac{1}{t-1} \underbrace{\sum_{\tau=1}^{t-1} (1[a = a_{\tau}] - \hat{p}_{\tau}(a))}_{=: \, M_{t-1}^a}$$
where $M_{t-1}^a$ defines a Martingale (i.e., $\mathbb{E}_{\hat{p}}[M_{t+1}^a \, | \, M_1^a, \ldots, M_t^a] = M_t$) and also satisfies $|M_{t+1}^a - M_{t}^a| \leq 1$ for all $t$, allowing us to apply the Azuma-Hoeffding inequality:
$$\mathbb{P}_{\hat{p}(a_{<t})}(|M_{t-1}^a| \geq \delta_t) \leq 2\exp(-\delta_t^2/2(t-1))$$
$$\implies \mathbb{P}_{\hat{p}(a_{<t})}(|p_t(a) - \bar{p}_t(a)| \geq \delta_t) \leq 2\exp(-(t-1)\delta_t^2/2)$$

For an $\ell^{\infty}$ guarantee, we need this bound to hold across all $a \in \mathcal{A}$. Applying a union bound,
\begin{align*}
\mathbb{P}_{\hat{p}(a_{<t})}(|p_t(a) - \bar{p}_t(a)| < \delta_t \;\; \forall \; a \in \mathcal{A} \;\; \forall \; t \in [2, T]) &= 1 - \mathbb{P}_{\hat{p}(a_{<t})}(\exists \; a \in \mathcal{A}, \, t \in [2, T]: |p_t(a) - \bar{p}_t(a)| \geq \delta_t)\\
&\geq 1 - \sum_{a \in \mathcal{A}} \sum_{t=2}^{T} \mathbb{P}_{\hat{p}(a_{<t})}(|p_t(a) - \bar{p}_t(a)| \geq \delta_t)\\
&\geq 1- 2|\mathcal{A}| \sum_{t=2}^{T} \exp(-(t-1) \delta_t^2/2)
\end{align*}
$$\implies \mathbb{P}_{\hat{p}(a_{<t})}\left(|p_t(a) - \bar{p}_t(a)| < \sqrt{\frac{2\log(2|\mathcal{A}|(T-1)/\epsilon)}{t-1}} \;\; \forall \; a \in \mathcal{A} \;\; \forall \; t \in [2, T]\right) \geq 1-\epsilon$$
And so, with probability $1-\epsilon$:
$$\norm{p_t - \bar{p}_t}_{\infty} < \sqrt{\frac{2\log(2|\mathcal{A}|(T-1)/\epsilon)}{t-1}}$$
as required.
\end{proof}

\begin{lemma}
\label{lem:pbar_oracle_infty_bound}
Suppose that the learner's $\hat{R}_t$ satisfies the $f$-guarantee of \Cref{eqn:boltzmannhumanlearning}. Then for $t > 1$ and $\bar{p}_t$ as defined in the statement of \Cref{lem:pred_pbar_infty_bound}, we can bound
\begin{equation}
\norm{\bar{p}_t - p^{*}}_{\infty} \leq \frac{\beta}{2} \frac{f(t-1)}{t-1}
\end{equation}
\end{lemma}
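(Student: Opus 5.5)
We need $\|\bar p_t - p^*\|_\infty \le \frac{\beta}{2}\frac{f(t-1)}{t-1}$, where $\bar p_t = \frac{1}{t-1}\sum_{\tau<t}\hat p_\tau$ and $p^* = \pi^\beta(R^*)$. The plan is to first use the triangle inequality to reduce to a per-step bound:
\[
\norm{\bar p_t - p^*}_\infty \le \frac{1}{t-1}\sum_{\tau=1}^{t-1}\norm{\hat p_\tau - p^*}_\infty .
\]
Then we prove a Lipschitz estimate for the softmax map in $\ell_\infty$, namely $\norm{\pi^\beta(R)-\pi^\beta(R')}_\infty \le \frac{\beta}{2}\norm{R-R'}_\infty$. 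Applying it with $R=\hat R_\tau$, $R'=R^*$ and summing, the $f$-guarantee (\Cref{eqn:boltzmannhumanlearning}) at horizon $t-1$ gives
\[
\sum_{\tau\le t-1}\norm{\hat R_\tau - R^*}_\infty \le f(t-1),
\]
which yields the claim.

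For the Lipschitz estimate, interpolate $R(s) = R' + s(R-R')$ and differentiate $p_a(s) = \pi^\beta(R(s))(a)$. With $\Delta = R-R'$, the standard softmax derivative gives
\[
\frac{d}{ds}p_a = \beta\, p_a\left(\Delta_a - \sum_b p_b \Delta_b\right).
\]
It therefore suffices to show that $\abs{p_a(\Delta_a - \sum_b p_b\Delta_b)} \le \frac12\norm{\Delta}_\infty$ uniformly in $s$, and then integrate over $s\in[0,1]$.

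Write $\Delta_a - \sum_b p_b\Delta_b = (1-p_a)\Delta_a - \sum_{b\ne a}p_b\Delta_b$. Its absolute value is at most $(1-p_a)\norm{\Delta}_\infty + (1-p_a)\norm{\Delta}_\infty = 2(1-p_a)\norm{\Delta}_\infty$. This only gives the factor $2p_a(1-p_a)\le \frac12$, i.e.\ $\frac{1}{2}\cdot\norm{\Delta}_\infty$ times $\beta$, which is exactly the bound we want.

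The main thing to check is this constant: the naive bound $\max_b\Delta_b - \min_b\Delta_b \le 2\norm{\Delta}_\infty$ gives only $\beta\norm{\Delta}_\infty$, so the factor $p_a(1-p_a)\le\frac14$ must be extracted as above. If the paper's convention instead lost a factor, one could fall back on the cruder constant, but the above decomposition should give exactly $\beta/2$. No probabilistic argument is needed, since $\bar p_t$ is deterministic given the learner's $\hat R_\tau$ (or is bounded pathwise).
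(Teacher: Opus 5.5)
Your proposal is correct and follows the paper's proof. The steps match: average via the triangle inequality, apply the $\ell_\infty$ Lipschitz bound $\norm{\pi^\beta(R)-\pi^\beta(R')}_\infty \le \frac{\beta}{2}\norm{R-R'}_\infty$ for softmax, then invoke the $f$-guarantee at horizon $t-1$. The one addition is that you justify the constant $\beta/2$ via the derivative bound $2p_a(1-p_a)\le \frac12$ along the interpolation path, whereas the paper simply asserts it.
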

\begin{proof}
Note that $\hat{p}_{\tau}(a) = \text{softmax}(\beta \hat{R}_{\tau})(a)$ and $p^{*}(a) = \text{softmax}(\beta R^{*})(a)$. Since softmax is Lipschitz continuous under $\ell^{\infty}$ with constant $1/2$, we have that
$$\norm{\hat{p}_{\tau} - p^{*}}_{\infty} \leq \frac{\beta}{2} \norm{\hat{R}_{\tau} - R^{*}}_{\infty}$$
allowing us to bound
\begin{align*}
\label{eqn:humanoracleerror}
\norm{\bar{p}_t - p^{*}}_{\infty} &\leq \frac{1}{t-1} \sum_{\tau=1}^{t-1} \norm{\hat{p}_{\tau} - p^{*}}_{\infty}\\
&= \frac{\beta}{2} \frac{1}{t-1} \sum_{\tau=1}^{t-1} \norm{\hat{R}_{\tau} - R^{*}}_{\infty}\\
&\leq \frac{\beta}{2} \frac{f(t-1)}{t-1}
\end{align*}
where we have made use of the stateless $f$-guarantee (\Cref{eqn:boltzmannhumanlearning}) which says that:
$$\sum_{t=1}^{T} \norm{\hat{R}_{t} - R^{*}}_{\infty} \leq f(T)$$
\end{proof}

We are now ready to prove \Cref{prop:stateless_linfty_empirical_average}.
\label{app:proof_lem_pbar_oracle_infty_bound}
\begin{proof}[Proof of \Cref{prop:stateless_linfty_empirical_average}]
Write $p_t(a) := \pi^{\beta}(R_t)(a)$, $\hat{p}_t(a) := \pi^{\beta}(\hat{R}_t)(a)$, and $p^{*}(a) = \pi^{\beta}(R^{*})(a)$ for the Boltzmann policies associated with the predictor, learner, and ground-truth reward functions respectively. Using the expression for $R_t$ in \Cref{defn:stateless_average_predictor} (and using an analogous expression for $R^{*}$), we can write
$$R_t(a) - R^{*}(a) = \frac{1}{\beta}\left(\log \frac{p_t(a)}{p^{*}(a)} - \frac{1}{|\mathcal{A}|} \sum_{a' \in \mathcal{A}} \log \frac{p_t(a')}{p^{*}(a')}\right)$$
which lets us write
\begin{align*}
\norm{R_t - R^{*}}_{\infty} &= \max_{a \in \mathcal{A}} \frac{1}{\beta} \Bigg|\log \frac{p_t(a)}{p^{*}(a)} - \frac{1}{|\mathcal{A}|} \sum_{a' \in \mathcal{A}} \log \frac{p_t(a')}{p^{*}(a')}\Bigg|\\
&\leq \max_{a \in \mathcal{A}} \frac{1}{\beta} \left(\Bigg|\log \frac{p_t(a)}{p^{*}(a)}\Bigg| + \Bigg|\frac{1}{|\mathcal{A}|} \sum_{a' \in \mathcal{A}} \log \frac{p_t(a')}{p^{*}(a')}\Bigg|\right)\\
&\leq \max_{a \in \mathcal{A}} \frac{2}{\beta} \Bigg|\log \frac{p_t(a)}{p^{*}(a)}\Bigg|\\
&\leq \max_{a \in \mathcal{A}} \frac{2}{\beta} \frac{|p_t(a) - p^{*}(a)|}{\min(p_t(a), p^{*}(a))}\\
&\leq \frac{2}{\kappa_t \beta} \norm{p_t - p^{*}}_{\infty}
\end{align*}
for $t \geq t_{\text{e}}$, where in the second-to-last line we have made use of
$$\left|\log\frac{p_t(a)}{p^{*}(a)}\right| \leq \frac{|p_t(a) - p^{*}(a)|}{\min(p_t(a), p^{*}(a))}$$
via the mean-value theorem. We can bound $\norm{p_t - p^{*}}_{\infty}$ by decomposing
\begin{equation}
\label{eqn:probdecomp}
p_t(a) - p^{*}(a) = \underbrace{(p_t(a) - \bar{p}_t(a))}_{\text{predictor-learner error}} + \underbrace{(\bar{p}_t(a) - p^{*}(a))}_{\text{learner-oracle error}}
\end{equation}
Using the previous lemmas, with probablity $1-\epsilon$:
\begin{align*}
\norm{R_t - R^{*}}_{\infty} &\leq \frac{2}{\kappa_t \beta} (\underbrace{\norm{p_t - \bar{p}_t}_{\infty}}_{\text{\Cref{lem:pred_pbar_infty_bound}}} + \underbrace{\norm{\bar{p}_t - p^{*}}_{\infty}}_{\text{\Cref{lem:pbar_oracle_infty_bound}}})\\
&< \frac{2}{\kappa_t \beta} \sqrt{\frac{2\log(2|\mathcal{A}|(T-1)/\epsilon)}{t-1}} + \frac{1}{\kappa_t} \frac{f(t-1)}{t-1}
\end{align*}
Summing over $t \in [t_{\text{e}}, T]$ gives us the required result.
\end{proof}

The assumption of a Boltzmann-rational learner is important for being able to prove things about $\norm{R_t - R^{*}}_{\infty}$ since it ensures that the learner will always place a non-zero probability on each action (with a fixed $\beta$), giving the predictor an opportunity to understand the value of $\hat{R}_t(a)$ (and hence $R^{*}(a)$) across all actions $a \in \mathcal{A}$. In contrast, a no-regret learner has no such guarantees about exploration, since in principle they can instantly select the optimal action $a^{*}$ and commit to it for all time, making it impossible for $R_t$ to learn about the reward associated with any action other than $a^{*}$.

In summing over $t \in [t_e, T]$, we are only considering the prediction error after the learner has already explored each action. As a result, for very large $|\mathcal{A}|$, this guarantee may lose its usefulness due to ignoring the potentially long, non-negligible duration $t < t_{\text{e}}$ when the learner is still to explore all actions. In this case, a different proof strategy may be necessary for proving guarantees on these earlier errors.

\subsection{Proof of \Cref{prop:stateful_linfty_empirical_average}}
\label{app:proof_prop_stateful_linfty_empirical_average}

The following follows a similar structure to the stateless case (\Cref{app:proof_prop_stateless_linfty_empirical_average}).
\begin{lemma}
\label{lem:stateful_pred_pbar_infty_bound}
Suppose that $Q_t$ follows an averaging strategy (\Cref{defn:stateful_average_predictor}). Then with probability $1-\epsilon$,
\begin{equation*}
\norm{p_t(\cdot|s) - \bar{p}_t(\cdot|s)}_{\infty} < \sqrt{\frac{2\log(2|\mathcal{S}||\mathcal{A}|(T-1)/\epsilon)}{N_{t-1}(s)}} \qquad \forall \; s \in \mathcal{S}: N_{t-1}(s) > 0
\end{equation*}
for $t > 1$, defining the state-wise time-averaged learner policy:
$$\bar{p}_t(a|s) := \frac{1}{N_{t-1}(s)} \sum_{i=1}^{N_{t-1}(s)} \hat{p}_{\tau_i(s)}(a|s)$$
where $\tau_{i}(s)$ is the time step corresponding to the $i$th visit of state $s$.
\end{lemma}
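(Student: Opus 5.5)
Write the error as a per-state martingale and apply Azuma--Hoeffding, then union bound; the only new ingredient compared to \Cref{lem:pred_pbar_infty_bound} is that the normalisation is the random visit count $N_{t-1}(s)$ rather than $t-1$. Fix $s \in \mathcal{S}$ and $a \in \mathcal{A}$. On $\{N_{t-1}(s)>0\}$, write
$$p_t(a|s) - \bar{p}_t(a|s) = \frac{1}{N_{t-1}(s)} \sum_{i=1}^{N_{t-1}(s)} \left(1[a_{\tau_i(s)} = a] - \hat{p}_{\tau_i(s)}(a|s)\right) = \frac{M^{s,a}_{N_{t-1}(s)}}{N_{t-1}(s)},$$
where $M^{s,a}_n := \sum_{i=1}^{n} (1[a_{\tau_i(s)}=a] - \hat{p}_{\tau_i(s)}(a|s))$ is indexed by the visit number $n$ rather than by time.

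The first step is to show that $(M^{s,a}_n)_n$ is a martingale with increments bounded by $1$, with respect to the filtration $\mathcal{G}_n$ generated by the history up to and including the state $s_{\tau_n(s)}$. Each $\tau_i(s)$ is a stopping time for the natural filtration of $(s_1,a_1,s_2,\ldots)$. Conditionally on $\mathcal{G}_n$, the action $a_{\tau_n(s)}$ is drawn from $\hat{p}_{\tau_n(s)}(\cdot|s)$, and this policy is itself $\mathcal{G}_n$-measurable, since $\hat{Q}$ depends only on past information. The increment $1[a_{\tau_n(s)}=a]-\hat{p}_{\tau_n(s)}(a|s)$ therefore has conditional mean zero given $\mathcal{G}_n$, and it lies in $[-1,1]$. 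This optional-sampling or stopped-filtration argument is the step I expect to need the most care, because the number of visits is random and determined by the learner-environment interaction $\hat{\mu}$. To make it rigorous, extend $M^{s,a}_n$ to all $n \le T-1$ on a product space: if state $s$ is visited fewer than $n$ times, pad the sequence with independent mean-zero dummy increments, or simply freeze it.

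The second step is to apply Azuma--Hoeffding for each fixed deterministic $n \in [1,T-1]$:
$$\mathbb{P}(|M^{s,a}_n| \ge n\delta_n) \le 2\exp(-n\delta_n^2/2).$$
Union-bound over $s\in\mathcal{S}$, $a\in\mathcal{A}$ and $n \in [1,T-1]$, and choose $\delta_n = \sqrt{2\log(2|\mathcal{S}||\mathcal{A}|(T-1)/\epsilon)/n}$. Then with probability at least $1-\epsilon$,
$$|M^{s,a}_n|/n < \delta_n \quad \text{simultaneously for all } s, a, n.$$
Taking the union over visit counts $n$ instead of times $t$ is what avoids conditioning on the random $N_{t-1}(s)$. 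On this event, substitute $n = N_{t-1}(s)$, which is valid for every $t$ and every $s$ with $N_{t-1}(s)>0$. Taking the max over $a$ gives the stated $\ell_\infty$ bound with $N_{t-1}(s)$ in the denominator, uniformly over $t\in[2,T]$.
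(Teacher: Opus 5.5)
Your proof is correct. It uses the paper's skeleton: the same visit-indexed martingale $M^{(s,a)}_n$, Azuma--Hoeffding with unit-bounded increments, and a union bound over $|\mathcal{S}||\mathcal{A}|(T-1)$ events. That is why you recover exactly the same logarithmic factor.

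The real difference is how the random normaliser $N_{t-1}(s)$ is handled. The paper applies Azuma--Hoeffding \emph{conditionally} on the event $N_{t-1}(s)=n$ and then union-bounds over $(s,a,t)$ ``given $N$''. You instead apply it at each deterministic visit count $n$, using the stopping times $\tau_n(s)$ and a frozen or padded extension. You then union-bound over $(s,a,n)$ and substitute $n=N_{t-1}(s)$ only on the resulting good event.

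Your route is the rigorous one. Conditioning on the visit count can destroy the martingale structure, because how often $s$ is visited may depend on which actions were taken there. Here is an example.
\begin{itemize}
\item Action $a$ at $s$ self-loops, action $b$ leads to an absorbing state, the learner starts at $s$ and plays uniformly there.
\item For $n<t-1$, the event $N_{t-1}(s)=n$ forces the action pattern $a^{n-1}b$ at $s$.
\item Hence $|p_t(a|s)-\bar p_t(a|s)|=1/2-1/n$ deterministically on this event.
\item The paper's conditional bound $2\exp(-n\delta^2/2)$ with $\delta=1/4$ is therefore violated for every $n\geq 23$.
\end{itemize}
The unconditional lemma still holds because such events have tiny probability, and that is exactly what your argument captures. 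It costs nothing in constants, since $n$ ranges over $T-1$ values just as $t$ did. It also makes uniformity over all $t\in[2,T]$ automatic.

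One harmless indexing point: $M^{s,a}_n$ contains $a_{\tau_n(s)}$. So it is adapted to $\mathcal{G}_{n+1}$ (or to the history up to and including $a_{\tau_n(s)}$), not to $\mathcal{G}_n$ as you wrote. What Azuma actually needs is that the $n$-th increment has conditional mean zero given $\mathcal{G}_n$, and you have established that.
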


\begin{proof}
The proof mostly follows the structure of the stateless case (\Cref{lem:pred_pbar_infty_bound}). See that
$$p_t(a|s) - \bar{p}_t(a|s) = \frac{1}{N_{t-1}(s)} \underbrace{\sum_{i=1}^{N_{t-1}(s)} (1[a_{\tau_{i}(s)} = a] - \hat{p}_{\tau_i(s)}(a|s))}_{=: \, M_{N_{t-1}(s)}^{(s,a)}}$$
where $M_{n}^{(s, a)}$ defines a Martingale (i.e., $\mathbb{E}_{\hat{\mu}}[M_{n+1}^{(s, a)}\, | \, M_1^{(s, a)}, \ldots, M_n^{(s, a)}] = M_n^{(s, a)}$) that also satisfies $|M_{n+1}^{(s, a)} - M_{n}^{(s, a)}| \leq 1$ for all $n$, allowing us to apply the Azuma-Hoeffding inequality to find:
$$\mathbb{P}_{\hat{\mu}(s_{<t}, a_{<t})}(|p_t(a|s) - \bar{p}_t(a|s)| \geq \delta_n \, | \, N_{t-1}(s) = n) \leq 2\exp(-n\delta_n^2/2)$$
$$\implies \mathbb{P}_{\hat{\mu}(s_{<t}, a_{<t})}\left(|p_t(a|s) - \bar{p}_t(a|s)| \geq \sqrt{\frac{2\log(2|\mathcal{S}||\mathcal{A}|(T-1)/\epsilon)}{N_{t-1}(s)}} \, \Bigg| \, N_{t-1}(s) = n\right) \leq \frac{\epsilon}{|\mathcal{S}||\mathcal{A}|(T-1)}$$
Applying a union bound, we find that:
\begin{align*}
\mathbb{P}_{\hat{\mu}(s_{<t}, a_{<t})}\left(|p_t(a|s) - \bar{p}_t(a|s)| < \sqrt{\frac{2\log(2|\mathcal{S}||\mathcal{A}|(T-1)/\epsilon)}{N_{t-1}(s)}} \;\; \forall \; s \in \mathcal{S} \;\; \forall \; a \in \mathcal{A} \;\; \forall \; t \in [2, T] \, \Bigg| \, N\right) \geq 1-\epsilon
\end{align*}
As a result, with probability $1-\epsilon$,
$$\norm{p_t(\cdot|s) - \bar{p}_t(\cdot|s)}_{\infty} < \sqrt{\frac{2\log(2|\mathcal{S}||\mathcal{A}|(T-1)/\epsilon)}{N_{t-1}(s)}}$$
as required.
\end{proof}


\begin{lemma}
\label{lem:stateful_pbar_oracle_infty_bound}
Suppose that the learner's $\hat{Q}_t$ satisfies the $f$-guarantee of \Cref{eqn:statefulboltzmannhumanlearning}. Then for $t > 1$ and $\bar{p}_t$ as defined in the statement of \Cref{lem:stateful_pred_pbar_infty_bound}, we can bound
\begin{equation}
\norm{\bar{p}_t(\cdot|s) - p^{*}(\cdot|s)}_{\infty} \leq \frac{\beta}{2} \frac{f(N_{t-1}(s))}{N_{t-1}(s)} 
\end{equation}

\end{lemma}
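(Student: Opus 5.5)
The plan is to mirror the proof of \Cref{lem:pbar_oracle_infty_bound}, restricting all sums to the visits of the fixed state $s$. Fix $s \in \mathcal{S}$ with $N_{t-1}(s) > 0$, and write $\tau_1(s) < \tau_2(s) < \cdots$ for the successive visit times, as in \Cref{lem:stateful_pred_pbar_infty_bound}.

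The first step is a per-visit comparison. At each visit $\tau_i(s)$, the learner's policy at $s$ is $\hat{p}_{\tau_i(s)}(\cdot|s) = \mathrm{softmax}(\beta \hat{Q}_{\tau_i(s)}(s,\cdot))$. Likewise, $p^{*}(\cdot|s) = \mathrm{softmax}(\beta Q^{*}(s,\cdot))$. The same $\ell^\infty$ Lipschitz property of softmax (constant $1/2$) used in the stateless case then gives
\begin{equation*}
\norm{\hat{p}_{\tau_i(s)}(\cdot|s) - p^{*}(\cdot|s)}_{\infty} \leq \frac{\beta}{2} \norm{\hat{Q}_{\tau_i(s)}(s,\cdot) - Q^{*}(s,\cdot)}_{\infty}.
\end{equation*}

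The second step averages over visits. By definition, $\bar{p}_t(\cdot|s)$ is the average of these $N_{t-1}(s)$ policies. The triangle inequality therefore yields
\begin{equation*}
\norm{\bar{p}_t(\cdot|s) - p^{*}(\cdot|s)}_{\infty} \leq \frac{\beta}{2}\frac{1}{N_{t-1}(s)} \sum_{i=1}^{N_{t-1}(s)} \norm{\hat{Q}_{\tau_i(s)}(s,\cdot) - Q^{*}(s,\cdot)}_{\infty}.
\end{equation*}

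The final step, and the only real subtlety, is invoking \Cref{eqn:statefulboltzmannhumanlearning} on this sum. The sum runs over exactly the first $N_{t-1}(s)$ visits to $s$, i.e.\ the time steps $\tau \le t-1$ with $s_\tau = s$. The guarantee is stated for horizon $T$, but I read it as holding for every prefix length: it is a learning-rate assumption meant to hold at all times, exactly as the stateless proof used $f(t-1)$. Applying it at horizon $t-1$ bounds the sum by $f(N_{t-1}(s))$, which gives the claim.

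If one insists on the guarantee only at the terminal horizon $T$, this step would need either monotonicity of partial sums together with an $f$ defined per visit-count, or an explicit anytime assumption. I would state this interpretation explicitly rather than try to circumvent it.
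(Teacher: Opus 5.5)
Your proposal is correct and matches the paper's proof step for step: the $\ell^\infty$ Lipschitz bound of softmax with constant $1/2$ at each visit, then the triangle inequality over the $N_{t-1}(s)$ visits, then the $f$-guarantee. The paper also applies \Cref{eqn:statefulboltzmannhumanlearning} at horizon $t-1$ without comment, so your explicit anytime reading states an assumption the paper already relies on.
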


\begin{proof}
Note that $\hat{p}_{\tau}(a|s) = \text{softmax}(\beta \hat{Q}_{\tau}(s, \cdot))(a)$ and $p^{*}(a|s) = \text{softmax}(\beta Q^{*}(s, \cdot))(a)$, and hence by the Lipschitz inequality:
$$\norm{\hat{p}_{\tau}(\cdot|s) - p^{*}(\cdot|s)}_{\infty} \leq \frac{\beta}{2} \norm{\hat{Q}_{\tau}(s, \cdot) - Q^{*}(s, \cdot)}_{\infty}$$
\begin{align}
\label{eqn:statefulhumanoracleerror}
\implies \norm{\bar{p}_t(\cdot|s) - p^{*}(\cdot|s)}_{\infty} &\leq \frac{1}{N_{t-1}(s)} \sum_{i=1}^{N_{t-1}(s)} \norm{\hat{p}_{\tau_i(s)}(\cdot|s) - p^{*}(\cdot|s)}_{\infty} \nonumber \\
&\leq \frac{\beta}{2N_{t-1}(s)} \sum_{i=1}^{N_{t-1}(s)} \norm{\hat{Q}_{\tau_i(s)}(s, \cdot) - Q^{*}(s, \cdot)}_{\infty}\\
&\leq \frac{\beta}{2}\frac{f(N_{t-1}(s))}{N_{t-1}(s)}
\end{align}
where in the final line we have used the stateful $f$-guarantee (\Cref{eqn:statefulboltzmannhumanlearning}) which tells us that:
$$\sum_{i=1}^{N_T(s)} \norm{\hat{Q}_{\tau_i(s)}(s, \cdot) - Q^{*}(s, \cdot)}_{\infty} \leq f(N_T(s))$$
\end{proof}


We are now equipped to prove \Cref{prop:stateful_linfty_empirical_average}.

\begin{proof}[Proof of \Cref{prop:stateful_linfty_empirical_average}]
Similarly to the stateless case, we will write $p_t(a|s) := \pi^{\beta}(Q_t(s, \cdot))(a)$, $\hat{p}_t(a|s) := \pi^{\beta}(\hat{Q}_t(s, \cdot))(a)$, and $p^{*}(a|s) = \pi^{\beta}(Q^{*}(s, \cdot))(a)$ for the Boltzmann policies associated with the predictor, learner, and ground-truth action-value functions respectively. Analogously to the stateless case, using the explicit expression for $Q_t$ in \Cref{defn:stateful_average_predictor} (and using an analogous expression for $Q^{*}$), we can write
\begin{align*}
\norm{Q_t(s, \cdot) - Q^{*}(s, \cdot)}_{\infty} = \max_{a \in \mathcal{A}} |Q_t(s, a) - Q^{*}(s, a)| &\leq \max_{a \in \mathcal{A}} \frac{2}{\beta}\left|\log \frac{p_t(a|s)}{p^{*}(a|s)}\right|\\
&\leq \frac{2}{\kappa_t(s) \beta} \max_{a \in \mathcal{A}} |p_t(a|s) - p^{*}(a|s)|\\
&= \frac{2}{\kappa_t(s) \beta} \norm{p_t(\cdot|s) - p^{*}(\cdot|s)}_{\infty}
\end{align*}
for any $t \geq t_{\text{e}}(s)$. Then, by decomposing
\begin{equation}
\label{eqn:statefuldiffdecomp}
p_t(a|s) - p^{*}(a|s) = (p_t(a|s) - \bar{p}_t(a|s)) + (\bar{p}_t(a|s) - p^{*}(a|s))
\end{equation}
we have that with probability $1-\epsilon$,
\begin{align*}
\norm{Q_t(s, \cdot) - Q^{*}(s, \cdot)}_{\infty} &\leq \frac{2}{\kappa_t(s) \beta} (\underbrace{\norm{p_t(\cdot|s) - \bar{p}_t(\cdot|s)}_{\infty}}_{\text{\Cref{lem:stateful_pred_pbar_infty_bound}}} + \underbrace{\norm{\bar{p}_t(\cdot|s) - p^{*}(\cdot|s)}_{\infty}}_{\text{\Cref{lem:stateful_pbar_oracle_infty_bound}}})\\
&\leq \frac{2}{\kappa_t(s) \beta} \sqrt{\frac{2\log(2|\mathcal{S}||\mathcal{A}|(T-1)/\epsilon)}{N_{t-1}(s)}} + \frac{1}{\kappa_t(s)} \frac{f(N_{t-1}(s))}{N_{t-1}(s)}
\end{align*}
Summing over $t \in [t_{\text{e}}(s), T]$ gives us the required result.
\end{proof}

\section{Robustness of the Model}
\label{app:robustness_model}
After introducing our model, one might wonder why predicting a reward/action-value function at each time step is necessary, rather than just predicting a final reward/action-value function at the end of the episode. We can prove that this doesn't matter. An algorithm that predicts a reward at the end of the interaction can be used to solve the problem of predicting a reward at each step and vice versa. \textit{Importantly, this holds regardless of the choice of learner model and the choice of distance function.}

Fix a distance function \(d\) and a function \(f:\mathbb{N}\to\mathbb{R}_{\ge 0}\). We say an algorithm \(A\) that outputs a single reward estimate \(R\) after \(T\) rounds is \(f(T)\)-no-regret if, for the ground-truth reward \(R^*\),
\[
T\, d(R, R^*) \le f(T).
\]
Likewise, an algorithm \(B\) that outputs an estimate \(R_t\) on each round \(t\in\{1,\dots,T\}\) is \(f(T)\)-no-regret if
\[
\sum_{t=1}^T d(R_t, R^*) \le f(T).
\]

Any algorithm $A$ that predicts a reward estimate at every step with regret $f(T)$ can be converted to an algorithm $B$ that predicts a reward at the end with no loss in regret, if we are okay with in expectation guarantees by simply picking a reward at random.

If we are not okay with randomized guarantees, there also exists a deterministic algorithm $B$ that holds for any convex distance function $d$, by simply averaging all of the outputted rewards. Distance functions, e.g. any norm, tend to be convex, so this is a natural assumption to be making.

\begin{proposition}
    [Per-step to final reduction]
    \label{prop:reduction_end}
    Under distance function $d$, for any learner model, if there exists an algorithm $A$ that predicts rewards at every iteration and is $f(T)$ no-regret, then there exists a randomized algorithm $B$ that predicts a single reward at the end and is $f(T)$ no-regret.

    If the distance function $d$ is convex there also exists a deterministic algorithm $B'$ that predicts a single reward at the end and is $f(T)$ no-regret.
\end{proposition}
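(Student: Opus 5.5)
The plan is to build $B$ directly from the per-step outputs of $A$, in two variants.

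\textbf{Randomized algorithm $B$.} Run $A$ internally on the observed behavior $(s_1,a_1,\ldots,s_T,a_T)$. This yields the sequence $R_1,\ldots,R_T$; each $R_t$ depends only on the prefix before step $t$, so the whole sequence is available at the end. Then $B$ draws an index $\tau$ uniformly from $\{1,\ldots,T\}$, independently of everything else, and outputs $R:=R_\tau$. Conditioning on the realized sequence gives
\[
T\,\mathbb{E}_\tau\bigl[d(R_\tau,R^*)\bigr] = T\cdot\frac{1}{T}\sum_{t=1}^T d(R_t,R^*) = \sum_{t=1}^T d(R_t,R^*) \le f(T).
\]
Taking any further expectation over the learner's randomness preserves the inequality. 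This argument uses nothing about the learner model, which is what makes the proposition hold for any learner. I will state explicitly that the randomized guarantee is in expectation over $\tau$, since that is the natural reading of ``no-regret'' for a randomized $B$.

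\textbf{Deterministic algorithm $B'$.} Here $B'$ outputs the average $\bar R := \frac{1}{T}\sum_{t=1}^T R_t$. I read ``$d$ convex'' as convexity of $R\mapsto d(R,R^*)$ for fixed $R^*$, and need reward estimates to live in a convex set (e.g.\ $\mathbb{R}^{|\mathcal{A}|}$ or $\mathbb{R}^{|\mathcal{S}||\mathcal{A}|}$) so that $\bar R$ is admissible. Jensen's inequality then gives
\[
T\,d(\bar R,R^*) \le T\cdot\frac{1}{T}\sum_{t=1}^T d(R_t,R^*) \le f(T).
\]

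\textbf{Main obstacle.} The only delicate point is that the averaged reward must remain a valid prediction. For the $\sigma$-normalized classes in the paper this holds, because the normalization constraint is linear and hence preserved under averaging. For distances defined through induced policies, such as $\mathrm{KL}(\pi^\beta(R)\,\|\,\pi^\beta(R^*))$, convexity in $R$ is not automatic. I will therefore simply take convexity of $d(\cdot,R^*)$ on the estimate space as the hypothesis. I will also note that norm-based measures satisfy it by the triangle inequality and homogeneity.
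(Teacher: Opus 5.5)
Your proposal is correct and follows the paper's proof. Sampling $\tau$ uniformly gives $T\,\mathbb{E}[d(R_\tau,R^*)]=\sum_t d(R_t,R^*)\le f(T)$, and for convex $d$ the average $\frac{1}{T}\sum_t R_t$ works by Jensen's inequality; your caveats (the randomized guarantee is in expectation, the estimate space must be convex, KL-type distances need not be convex in $R$) are sound points the paper leaves implicit.
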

\begin{proof}
\label{proof:perstep_tofinal}
     Suppose you have access to an algorithm that predicts a sequence of rewards $R_t$ at every iteration satisfying the guarantee
    \begin{equation*}
        \sum_{t = 1}^T d(R_t, R^*) \leq f(T).
    \end{equation*}
    We will give two algorithms that take the sequence of rewards $R_1, \dots, R_T$ and output a single reward at the end.

    Consider the algorithm that simply samples some time step uniformly at random $s \sim \mathrm{Unif}([T])$, and predicts $R_s$. The expected distance of this procedure is:
    \begin{equation*}
        \E[T d(R_s, R^*)] = T \E[d(R_s, R^*)] = T \cdot \frac{1}{T} \sum_{t = 1}^T d(R_t, R^*) = \sum_{t = 1}^T d(R_t, R^*) \leq f(T).
    \end{equation*}

    To construct a deterministic algorithm when $d$ is convex, we use Jensen's inequality. In particular, consider the algorithm that simply outputs the average of the $R_t$. This has regret:
    \begin{align*}
        f(T) &\geq \sum_{t = 1}^T d(R_t, R^*) \\
             &= T \cdot \frac{1}{T} \sum_{t = 1}^T d(R_t, R^*) \\
             &\geq T d\left(\frac{1}{T}\sum_{t = 1}^T R_t, R^*\right). \qedhere
    \end{align*}
\end{proof}

While most distance functions are convex, the best-response distance (c.f. \Cref{subsubsec:bestdist}) is not. Despite this, there still exists a deterministic reduction that suffers a regret of at most $\abs{\mathcal{A}}f(T)$ regret.
\begin{proposition}
    [Best-response per-step to final]
    \label{prop:reduction_end_reverse}
    Under the best-response distance, for any learner model, if there exists an algorithm $A$ that predicts rewards at every iteration and is $f(T)$ no-regret, then there exists a deterministic algorithm $B$ that predicts a single reward at the end and is $\abs{\mathcal{A}}f(T)$ no-regret.
\end{proposition}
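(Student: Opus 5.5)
The plan is to exploit that the best-response distance depends on each $R_t$ only through its induced best action $a^{R_t}$. So collapsing the sequence $R_{1:T}$ into a single final reward amounts to choosing one action well. Write $\Delta(a) := \max_{a'} R^*(a') - R^*(a) \geq 0$ for the suboptimality gap. The hypothesis on $A$ then reads
\begin{equation*}
\sum_{t=1}^T \Delta(a^{R_t}) \leq f(T).
\end{equation*}
The goal is a deterministic algorithm $B$ that outputs a single $R$ with $T\,\Delta(a^{R}) \leq |\mathcal{A}| f(T)$.

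The construction is a majority vote. Run $A$ to obtain $R_1,\dots,R_T$, and compute the best responses $a^{R_1},\dots,a^{R_T}$ with any fixed tie-breaking. Count $n(a) := |\{t : a^{R_t} = a\}|$ and let $\hat a := \argmax_a n(a)$. Output the indicator reward $R(a) = \mathds{1}(a = \hat a)$, whose best response is exactly $\hat a$. This mirrors the construction in the proof of \Cref{prop:stateless_easy_noregret}.

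The analysis is a pigeonhole step. Since $\sum_a n(a) = T$, we have $n(\hat a) \geq T/|\mathcal{A}|$. Every round with $a^{R_t} = \hat a$ contributes exactly $\Delta(\hat a)$ to the sum, and all other terms are nonnegative. Hence
\begin{equation*}
\frac{T}{|\mathcal{A}|}\,\Delta(\hat a) \leq n(\hat a)\,\Delta(\hat a) \leq \sum_{t=1}^T \Delta(a^{R_t}) \leq f(T).
\end{equation*}
Multiplying by $|\mathcal{A}|$ gives $T\,\Delta(a^{R}) \leq |\mathcal{A}| f(T)$, which is the claim. Determinism is immediate, because all choices (including tie-breaking in $\argmax$ and in best responses) are fixed rules.

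The only step needing care is interpreting ``final-prediction distance'' for the best-response measure, which is not of the form $d(R,R^*)$ with $d$ convex. I will define it as $T$ times the single-round gap $\Delta(a^R)$, consistent with how $\BRD$ of a constant sequence is computed. This is also the form used in the proof of \Cref{prop:lowkl_lowBR}. In the stateful version the same argument should go through by a per-state majority vote, though a direct pigeonhole over policies would give a worse factor. I would state only the stateless case as in the proposition.
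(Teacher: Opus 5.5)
Your proposal is correct and is essentially the paper's proof. Both take a majority vote over the best responses $a^{R_t}$, use pigeonhole to get $n(\hat a) \geq T/|\mathcal{A}|$, and output the indicator reward on $\hat a$; you also state explicitly that the other gap terms are nonnegative, which the paper uses without comment.
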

\begin{proof}
\label{proof:BR_perstep_tofinal}
    Suppose you have access to an algorithm that predicts a sequence of rewards $R_t$ at every iteration satisfying the guarantee
    \begin{equation*}
        \sum_{t = 1}^T R^*(a^*) - R^*(a^{R_t}) \leq f(T),
    \end{equation*}
    where recall $a^{R_t}$ is the best action under reward $R_t$. Let $a$ be the most common best action under the sequence of rewards. Necessarily, it must be played at least $T/\abs{\mathcal{A}}$ times. Therefore,
    \begin{equation*}
        \frac{T}{\abs{\mathcal{A}}} \left(R^*(a^*) - R^*(a)\right) \leq f(T).
    \end{equation*}
    Let $R$ be the reward that places a reward of 1 on $a$ and a reward of 0 otherwise. Predicting this reward at the end results in a final best-response regret of
    \begin{equation*}
        T(R^*(a^*) - R^*(a^R)) \leq \abs{\mathcal{A}}f(T),
    \end{equation*}
    as claimed.
\end{proof}

We can go the other direction too. Any algorithm that outputs a single reward at the end suffering a regret of at most $f(T)$ can be converted to an algorithm that outputs a reward at every step by running the algorithm at every step. This suffers a regret of $\sum_{t = 1}^T f(t)/t$.

\begin{proposition}
\label{prop:final_perstep_app}
    [Final to per-step]
    Suppose there exists an algorithm $B$ that is $f(T)$ no-regret, for non-decreasing $f$, and predicts a single reward at the end. Then there exists an algorithm $A$ that is $f(T)\log(T)$ no-regret and predicts a reward at every time step.

    If regret is sublinear by a polynomial factor, i.e. $f(T) \in \bO{T^\alpha}$ for some $\alpha \in [0, 1)$, then this can be improved to $1 + \alpha^{-1}(f(T) - 1)$.
\end{proposition}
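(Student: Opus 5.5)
The natural construction is the doubling trick. Algorithm $A$ predicts a reward at every step by re-invoking $B$ on the observed prefix. Concretely, at each step $t$, $A$ runs $B$ on the behavior observed so far, as if the episode had horizon $t$ (or horizon $t-1$, using only the data $a_{<t}$). It then outputs $B$'s single final estimate $\tilde R_t$ as its step-$t$ prediction. Since $B$ is $f$-no-regret at every horizon, we get $t\, d(\tilde R_t,R^*) \le f(t)$, i.e.\ $d(\tilde R_t,R^*)\le f(t)/t$. This matches the paper's remark that the per-step regret is $\sum_{t=1}^T f(t)/t$. The whole proposition then reduces to bounding this sum, and the work is a summation estimate under the two hypotheses.

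\textbf{General case.} Because $f$ is non-decreasing, $f(t)\le f(T)$ for all $t\le T$, so
\[
\sum_{t=1}^T \frac{f(t)}{t}\le f(T)\sum_{t=1}^T \frac1t = f(T)\,H_T = O\big(f(T)\log T\big).
\]
If an exact constant is wanted, $H_T\le 1+\log T$, which gives $f(T)(1+\log T)$; this is $f(T)\log T$ up to constants for $T\ge 3$, and I would state it that way.

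\textbf{Polynomial case.} Suppose $f(t)\le c\,t^\alpha$ with $\alpha\in(0,1)$. The $t=1$ term contributes $f(1)$, which we normalize to at most $1$; this is where the additive constant $1$ in the claimed bound comes from. For the remaining terms, compare the sum with an integral:
\[
\sum_{t=2}^T c\,t^{\alpha-1}\le \int_1^T c\,x^{\alpha-1}\,dx=\frac{c}{\alpha}\,(T^\alpha-1).
\]
When $f(t)=c\,t^\alpha$ with $c=1$, this is exactly $\alpha^{-1}(f(T)-1)$, and the total is $1+\alpha^{-1}(f(T)-1)$. The $\alpha=0$ endpoint must be handled separately (constant $f$), where the logarithmic bound is the right one.

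\textbf{Main obstacle.} The main difficulty is matching the precise stated form $1+\alpha^{-1}(f(T)-1)$ for a general $f\in O(T^\alpha)$, rather than for $f$ literally equal to $T^\alpha$. For general $f$, I would state the polynomial case with the constant $c$ made explicit, i.e.\ a bound of $f(1)+\frac{c}{\alpha}(T^\alpha-1)$, and note that this is $O(f(T)/\alpha)$. A smaller point: one must check that running $B$ on a prefix is legitimate, namely that $B$'s guarantee holds for every horizon $t$ and only uses data available by time $t$. The paper's model, being anytime in $T$, allows this.
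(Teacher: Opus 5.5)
Your proposal is correct and is essentially the paper's proof: run $B$ on each prefix to get $d(R_t,R^*)\le f(t)/t$, then bound $\sum_{t=1}^T f(t)/t$ using monotonicity of $f$ in general and an integral comparison when $f(t)=t^\alpha$. Calling this the ``doubling trick'' is a misnomer, since you re-invoke $B$ at every step rather than over geometric epochs; apart from that, your bookkeeping is more careful than the paper's, which asserts $\sum_{t=1}^T 1/t\le\log T$ (false, since $H_T>\log T$) and silently takes $f(t)=t^\alpha$ exactly, so your $f(T)(1+\log T)$ and $f(1)+\frac{c}{\alpha}(T^\alpha-1)$ are the correct versions of the stated bounds.
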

\begin{proof}
\label{proof:final_toperstep}
   Suppose that there exists an algorithm that predicts a final reward $R$ satisfying the guarantee
    \begin{equation*}
        T d(R, R^*) \leq f(T).
    \end{equation*}
    To construct a low regret sequence of rewards at every iteration that only depend on observations up until that time step, consider the procedure that just calls this algorithm at every time step $t$. By assumption $d(R_t, R^*) \leq f(t)/t$. Summing over all time steps, this means total regret is
    \begin{equation*}
        \sum_{t = 1}^T d(R_t, R^*) \leq \sum_{t = 1}^T f(t)/t \leq f(T) \sum_{t = 1}^T 1/t \leq f(T)\log(T),
    \end{equation*}
    where we use the fact that regret $f$ never decreases.

    When regret is sublinear by a polynomial factor, that is when regret is $f(T) \in \mathcal{O}(T^\alpha)$ for $\alpha < 1$,
    \begin{align*}
        \sum_{t = 1}^T f(t)/t
        &= \sum_{t = 1}^T t^{\alpha - 1} \\
        &\leq 1 + \int_1^T t^{\alpha - 1} dt \\
        &= 1 + \frac{T^\alpha - 1}{\alpha}. \qedhere
    \end{align*}
\end{proof}

\end{document}